\documentclass[conference]{IEEEtran}
\usepackage{cite}
\usepackage{amsmath,amssymb,amsfonts,amsthm}
\usepackage{algorithm}
\usepackage{algorithmic}
\usepackage{graphicx}
\usepackage{textcomp}
\usepackage{xcolor}
\usepackage{hyperref}
\usepackage{threeparttable}
\usepackage{booktabs}
\usepackage{xcolor,pifont}
\usepackage[table]{xcolor}

\newcommand{\cmark}{\textcolor{green}{\ding{51}}}%
\newcommand{\xmark}{\textcolor{red}{\ding{55}}}%
\newcommand{\textbbf}[1]{{\bfseries\textcolor{blue}{#1}}}

\theoremstyle{plain} 
\newtheorem{theorem}{Theorem}
\newtheorem{lemma}{Lemma}
\newtheorem{corollary}{Corollary}

\theoremstyle{definition}
\newtheorem{definition}{Definition}
\newtheorem{assumption}{Assumption}

\def\BibTeX{{\rm B\kern-.05em{\sc i\kern-.025em b}\kern-.08em
    T\kern-.1667em\lower.7ex\hbox{E}\kern-.125emX}}
\begin{document}

\title{LexiSafe: Offline Safe Reinforcement Learning with Lexicographic Safety-Reward Hierarchy\\
\thanks{Identify applicable funding agency here. If none, delete this.}
}

\author{\IEEEauthorblockN{Hsin-Jung Yang}
\IEEEauthorblockA{\textit{Dept. of Mechanical Engineering} \\
\textit{Iowa State University}\\
Ames, USA \\
hjy@iastate.edu}
\and
\IEEEauthorblockN{Zhanhong Jiang}
\IEEEauthorblockA{\textit{\quad Translational AI Center\quad} \\
\textit{Iowa State University}\\
Ames, USA \\
zhjiang@iastate.edu}
\and
\IEEEauthorblockN{Prajwal Koirala}
\IEEEauthorblockA{\textit{Sibley School of Mechanical and Aerospace Engineering} \\
\textit{Cornell University}\\
Ithaca, USA \\
pk596@cornell.edu}
\and
\IEEEauthorblockN{Qisai Liu}
\IEEEauthorblockA{\textit{Dept. of Mechanical Engineering} \\
\textit{Iowa State University}\\
Ames, USA \\
supersai@iastate.edu}
\and
\IEEEauthorblockN{Cody Fleming}
\IEEEauthorblockA{\textit{Dept. of Mechanical Engineering} \\
\textit{Iowa State University}\\
Ames, USA \\
flemingc@iastate.edu}
\and
\IEEEauthorblockN{Soumik Sarkar}
\IEEEauthorblockA{\textit{Dept. of Mechanical Engineering} \\
\textit{Iowa State University}\\
Ames, USA \\
soumiks@iastate.edu}
}

\maketitle

\begin{abstract}
Offline safe reinforcement learning (RL) is increasingly important for cyber-physical systems (CPS), where safety violations during training are unacceptable and only pre-collected data are available. Existing offline safe RL methods typically balance reward–safety tradeoffs through constraint relaxation or joint optimization, but they often lack structural mechanisms to prevent safety drift. We propose LexiSafe, a lexicographic offline RL framework designed to preserve safety-aligned behavior. We first develop LexiSafe-SC, a single-cost formulation for standard offline safe RL, and derive safety-violation and performance-suboptimality bounds that together yield sample-complexity guarantees. We then extend the framework to hierarchical safety requirements with LexiSafe-MC, which supports multiple safety costs and admits its own sample-complexity analysis. Empirically, LexiSafe demonstrates reduced safety violations and improved task performance compared to constrained offline baselines. By unifying lexicographic prioritization with structural bias, LexiSafe offers a practical and theoretically grounded approach for safety-critical CPS decision-making. 

\end{abstract}

\begin{IEEEkeywords}
Offline Safe RL, Lexicographic, Safety-Reward Hierarchy, Single-cost, Multi-cost
\end{IEEEkeywords}

\section{Introduction}
Reinforcement learning (RL) has achieved remarkable success across diverse domains such as robotics~\cite{garaffa2021reinforcement}, manufacturing~\cite{samsonov2022reinforcement}, recommender systems~\cite{afsar2022reinforcement}, healthcare~\cite{yu2021reinforcement}, and even reasoning with large language models~\cite{zhai2024fine}. However, when applied to cyber-physical systems (CPS), such as autonomous driving~\cite{kiran2021deep}, smart grids~\cite{lu2018dynamic}, building energy management~\cite{yu2021review}, conventional RL faces critical limitations. These systems tightly couple computation and physical processes, where unsafe actions can directly cause physical harm, equipment failure, or service disruption. Ensuring safety is therefore not only desirable but mandatory for real-world deployment. This requirement is further amplified by the inherent vulnerabilities of deep RL agents, which often lack natural robustness to environmental perturbations \cite{LIU2024126} and remain susceptible to adversarial threats \cite{Lee_Ghadai_Tan_Hegde_Sarkar_2020}.

\begin{figure}[t]
    \centering
    \includegraphics[width=0.95\linewidth]{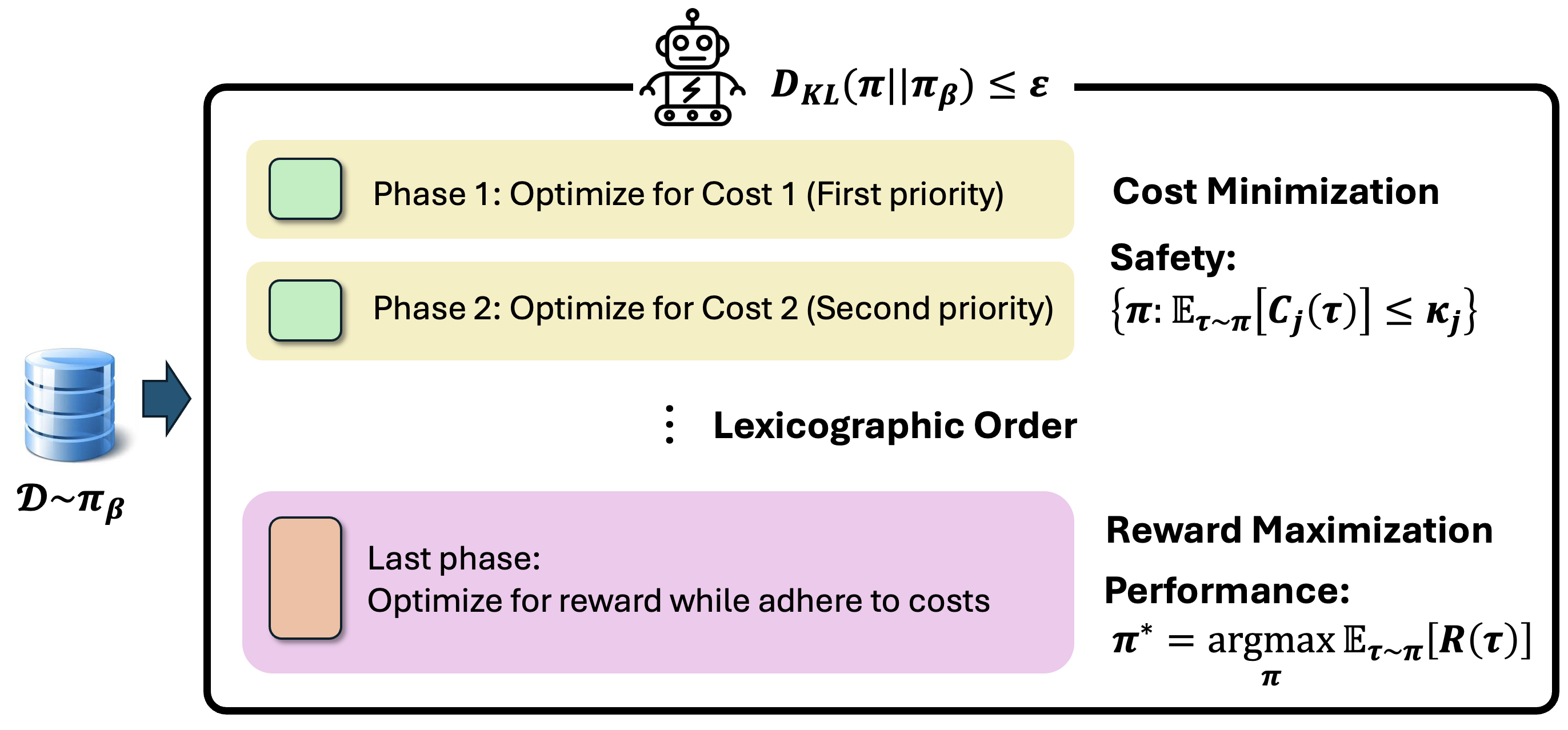}
    \caption{\textbf{LexiSafe}: The agent learns from an offline dataset $\mathcal{D}\sim\pi_\beta$ under a distributional shift constraint $D_{KL}(\pi||\pi_\beta)\leq \varepsilon$. In Stage 1 (phases marked in yellow boxes), the actor network is trained to minimize cumulative costs under constraints with safety hierarchy. In Stage 2 (phase in pink), the model is retrained to maximize reward. This enforces a lexicographic policy update, preserving safety while optimizing performance. Please see Definition~\ref{definition_3} for the formula in the Figure.}
    \label{fig:laxisafe_diagram}
\end{figure}

In CPS applications, safety often involves multiple and hierarchical constraints, rather than a single-cost signal. For instance, in autonomous driving, an agent must first avoid collisions (primary safety), then respect traffic regulation (secondary safety), and finally optimize fuel efficiency or passenger comfort (performance). Violating this hierarchy, e.g., prioritizing comfort over collision avoidance, is unacceptable. This multi-level safety dependency motivates a \textit{lexicographic} structure, where safety objectives are optimized sequentially according to their criticality before considering performance. Nevertheless, existing safe RL approaches rarely capture such safety hierarchies, treating safety and performance as jointly optimized under a single constraint.

In practice, direct online interaction for learning safe behavior in CPS is costly and risky, as unsafe exploration can lead to physical damage or system instability. This has motivated the study of offline safe RL~\cite{xu2022constraints,chemingui2025constraint}, where policies are trained from pre-collected datasets without further environment interactions. However, this setting introduces several challenges. Offline datasets often contain mixed or unsafe trajectories, complicating the identification of safe behaviors~\cite{koirala2024latent}. Furthermore, estimation errors in long-term cost and value functions may yield infeasible or overly conservative policies. While dual-variable or constrained formulations~\cite{koirala2024fawac,zheng2024safe,hong2024primal} attempt to balance safety and performance, they often suffer from optimization instability and lack of interpretability. Even if theoretical complexity bounds have been estabilished separately from safe RL~\cite{hasanzadezonuzy2021learning} and offline RL~\cite{hong2023offline}, analogous guarantees for offline safe RL, particularly under hierarchical safety objectives, remain underexplored.

Therefore, limitations motivate the central question of this work:
    \textit{How can we ensure hierarchical safety guarantees in offline reinforcement learning for cyber-physical systems, while still achieving near-optimal task performance?}
Recently, a few studies~\cite{skalse2022lexicographic,zhang2022lexicographic} have explored lexicographic to model hierarchical objectives. However, existing methods primarily focus on online settings, where safety and performance are optimized through continual environment exploration. Such approaches lack theoretical sample complexity guarantees and are typically limited to single-cost evaluations, making them difficult to deploy in safety-critical CPS domains that demand strict offline learning and multiple safety hierarchies.

\noindent\textbf{Contributions.}
To address these gaps, in this paper, we leverage \textit{lexicographic order}, which is of independent interest in recent
multi-objective RL literature~\cite{skalse2022lexicographic,xue2025multi}. We introduce \textit{LexiSafe} as in Figure~\ref{fig:laxisafe_diagram}, which addresses the fundamental tension between safety and performance in offline RL by introducing a lexicographic framework with multi-phase training. Unlike prior methods that relax constraints or sequentially train separate safety/performance models, LexiSafe unifies safety and performance by treating safety as a non-negotiable priority (one or multiple lexicographic safety objectives) and performance as a secondary goal, ensuring policy adheres to safety boundaries. Particularly, the multi-phase optimization enables the different cost minimization, ensuring the hierarchical safety priorities before the reward maximization. Our method theoretically grounds this mechanism with the first sample complexity bounds for lexicographic safe RL. LexiSafe demonstrates empirical dominance on the DSRL benchmark, outperforming constrained baselines across robotic manipulation and autonomous driving tasks by strictly enforcing safety and accelerating convergence. 
\noindent The main contributions are summarized in the following: 
(1) We propose LexiSafe (both \textit{LexiSafe-SC} and \textit{LexiSafe-MC}, SC and MC indicate single-cost and multi-cost), a novel framework that hierarchically separates safety constraints from performance optimization, ensuring safety violations are eliminated after initial convergence;
(2) We formally establish the constraint violation and performance suboptimality bounds for the single-cost scenario. On top of this, we derive the first sample complexity bounds for both single- and multi-cost cases, linking guarantees to policy architecture; 
(3) We validate our proposed LexiSafe by comparing it to multiple baselines on different DSRL benchmarks. With both single- and multi-cost scenarios, we show that the proposed LexiSafe outperforms baselines and achieves the best safety-compliant performance.
\begin{table}[t]
\caption{Comparison of methods with lexicographic order. $d$: feature dimension, $T$: episode length, $\epsilon$: accuracy.}
\begin{center}
\begin{threeparttable}
\begin{tabular}{c c c}
    \toprule
    \textbf{Method} & \textbf{Offline Setting} & \textbf{Complexity} \\ \midrule
      PBLRL \cite{skalse2022lexicographic}   &  \xmark                  &     \xmark                 \\
      PSQD \cite{rietz2023prioritized}   &  \xmark                   &   \xmark                   \\ 
      LPA \cite{tercan2024thresholded} & \xmark & \xmark \\
      LLRL \cite{xue2025multi} & \xmark & 
      $\tilde{\mathcal{O}}(\frac{d^2T^4}{\epsilon^2})$\\
      \textbf{LexiSafe-SC} & \cmark & Theorem~\ref{theorem_3}\\
      \textbf{LexiSafe-MC} & \cmark & Corollary~\ref{corollary_1}\\
      \bottomrule
\end{tabular}
\end{threeparttable}
\end{center}
\label{table:method_com}
\end{table}

\section{Related Work}
\noindent\textbf{Offline safe RL.}
Offline safe RL synthesizes offline RL~\cite{fu2022closer} and safe RL~\cite{gu2024review} intelligently such that an optimal safe policy is learned exclusively from an offline dataset. Despite some contemporary attempts to solve this class of problems, most of them are inadequate for handling constrained optimization and conservative learning concurrently. In one of early works, a batch policy learning algorithm~\cite{le2019batch} was developed with constraints by directly resorting to off-policy safety evaluation, while confining the approach to discrete action space. Pertaining to this method, a projection algorithm was proposed in ~\cite{polosky2022constrained} to project performance maximization policy back to safety-respecting region through the Fenchel duality. To further address conservatism issue in traditional pessimistic methods, Zhang et. al~\cite{zhang2023constrained} utilized a flow-GAN model to explicitly estimate the density of behavior policy, which enables optimization within a safe region. Orthogonal to existing approach, one recent work~\cite{koirala2024latent} first inferred latent safety constraints and then maximized reward by complying with such constraints. However, they require to train separate encoders for minimizing cost and maximizing reward.
More recently, diffusion models are also used for guiding the policy learning~\cite{zheng2024safe,mao2024diffusion}, trajectory generation~\cite{romer2024safe,zhang2025enhancing}, and behavior regularization~\cite{guo2025reward}. Compared with the prior work, our lexicographic framework ensures safety adherence throughout optimization within a single model.

\noindent\textbf{Lexicographic RL.}
Lexicographic RL (LRL)~\cite{skalse2022lexicographic} has recently become a competitive method as it involves ordering objectives by importance and optimizing them sequentially. 
This means the agent first focuses on satisfying the highest-priority objective, and only then considers lower-priority objectives, essentially treating them as constraints. This approach contrasts with methods that try to optimize all objectives simultaneously, which can be difficult to balance, particularly when objectives conflict. Skalse et al.~\cite{skalse2022lexicographic} proposed value-based and policy-based LRL methods and compared their approaches to baselines in solving constrained RL problems. Though in their work safe RL was considered as a natural application in LRL, offline safe RL has not been discussed with any detail.
Another work developed lexicographic actor-critic algorithm~\cite{zhang2022lexicographic} for urban autonomous driving, demonstrating empirical efficacy of LRL in hierarchical reward settings. To address continuous space lexicographic multi-objective RL problems, the authors in~\cite{rietz2023prioritized} proposed prioritized soft Q-decomposition for learning and adapting subtask solutions under lexicographic priorities. A more recent work~\cite{tercan2024thresholded} showed the shortcomings of thresholded lexicographic Q-learning and developed lexicographic projection algorithm to address the problem, by computing a lexicographically optimal direction to optimize the present unsatisfied highest importance objective while preserving the values of more important objectives using
projections onto hypercones of their gradients. To make LRL in linear MDPs theoretically grounded, Xue et al.~\cite{xue2025multi} established the sample complexity bound for the first time. While LRL provides a formal framework for constraint satisfaction under strict priority ordering, its application to offline safe RL remains nascent and severely understudied.

\section{Preliminaries and Problem Formulation}
\noindent\textbf{Offline safe RL.}
We consider a Markov decision process (CMDP) represented by the tuple $\mathcal{M}=(\mathcal{S}, \mathcal{A}, \mathcal{P}, r, c, \gamma, d_0)$. Herein, $\mathcal{S}$ signifies the state space, $\mathcal{A}$ the action space, $\mathcal{P}:\mathcal{S}\times\mathcal{A}\times\mathcal{S}\to[0,1]$ the state transition dynamics, $r:\mathcal{S}\times\mathcal{A}\to[0,r_m]$ the immediate reward function, bounded by some constant $r_m>0$, $c:\mathcal{S}\times\mathcal{A}\to[0,c_m]$ the cost function upper bounded by some constant $c_m>0$, $\gamma$ the discount factor, and $d_0$ the initial state distribution. 
Intuitively, the cost function $c$ sanctions safety-critical violations, compelling the agent to restrict cumulative penalties to a predetermined safety budget $\kappa$. A stochastic policy represented by $\pi (a|s):\mathcal{S}\to\mathcal{A}$, defines a mapping from the state $s$ to a probability distribution over actions $a$. We then also define the stationary state-action distribution under the policy $\pi$ as $d^\pi(s,a)=(1-\gamma)\sum_{h=0}^H\gamma^hp(s_h=s,a_h=a)$, where $p$ indicates the probability.
We denote a trajectory by $\tau= \{(s_0,a_0,r_0,c_0),(s_0,a_0,r_0,c_0),...,(s_T,a_T,r_T,c_T)\}$ that consists of a sequence of states, actions, rewards and costs over time. $T$ is the length of trajectory. Likewise, the discounted cumulative reward for a trajectory $\tau$ is defined as $R(\tau)=\sum_{t=0}^T\gamma^tr(s_t,a_t)$, and the discounted cumulative cost is $C(\tau)=\sum_{t=0}^T\gamma^tc(s_t,a_t)$. In our work, we learn a policy in safe RL from an offline dataset. Denote by $\mathcal{D}:=(\{s,a,s',r,c\})$, possibly with both safe and unsafe trajectories. We also define the value functions for both reward and cost in a unified way as $V^\pi_\nu(s)=\mathbb{E}_{\tau\sim\pi}[\sum_{t=0}^T\gamma^t\nu_t|s_t=s], \nu\in\{r,c\}$. The Q-functions for both are defined as $Q^\pi_\nu(s,a)=\mathbb{E}_{\tau\sim\pi}[\sum_{t=0}^T\gamma^t\nu_t|s_t=s,a_t=a]$.
We also denote by $\pi_\beta$ the unknown data-collecting policy for $\mathcal{D}$. Thus, the offline safe RL is formulated in the following:
\begin{equation}\label{eq_2}
    \text{max}_\pi V^\pi_r(s), \; \text{s.t.,} V^\pi_c(s)\leq \kappa;\; D_{KL}(\pi||\pi_\beta)\leq \varepsilon,
\end{equation}
where $D_{KL}(\cdot||\cdot)$ indicates the Kullback-Liebler (KL) divergence and $\varepsilon$ is a tolerance parameter. The constraints in Eq.~\ref{eq_2} ensures that the learned policy $\pi$ not only remains within the safe region defined by the cost threshold $\kappa$ but also close to the behavior policy $\pi_\beta$, mitigating the risk of producing out-of-distribution (OOD) actions.

\noindent\textbf{Lexicographic order.} From Eq.~\ref{eq_2}, we have known that offline safe RL is formulated as an optimization problem of maximizing one reward subject to two constraints. This problem setup to some extent resembles a multi-objective context where the policy should be learned first to encode the safety constraint from the offline dataset, and then to maximize the performance while complying with safety. The learning process could cause conflicts between safety and performance as some policies out of the safe region may achieve better performance. As conflicting objectives cannot be maximized concurrently, a notion of \textit{order} is defined in multi-objective optimization for prioritizing one objective (or constraint) over another~\cite{pineda2015revisiting,xue2025multi}. Here, a connection between CMDP and lexicographic MDP (LMDP) was already established: solving a CMDP is equivalent to solving a sequence of $k$ LMDPs. Corresponding to Eq.~\ref{eq_2}, $k=2$ such that we define formally the lexicographic order in LexiSafe:
\begin{definition}\label{definition_1}
    Given a pre-collected dataset $\mathcal{D}$ from some unknown behavior policy $\pi_\beta$, LexiSafe-SC enforeces a strict hierarchical priority between objectives:
    \begin{itemize}
        \item Primary objective (safety):
        \[\pi_\text{safe}\in\text{argmin}_\pi\mathbb{E}_{\tau\sim\pi}[C(\tau)] \;\text{s.t.,}\mathbb{E}_{s\sim\mathcal{D}}[D_{KL}(\pi||\pi_\beta)]\leq\varepsilon.\]
        A policy $\pi$ is feasible if it satisfies $\mathbb{E}_{\tau\sim\pi}[C(\tau)]\leq\kappa$.
        \item Secondary objective (performance): \[\pi^*=\text{argmax}_{\pi\in\Pi}\mathbb{E}_{\tau\sim\pi}[R(\tau)] \;\text{s.t.,}\mathbb{E}_{s\sim\mathcal{D}}[D_{KL}(\pi||\pi_\beta)]\leq\varepsilon,\]
        where $\Pi=\{\pi:\mathbb{E}_{\tau\sim\pi}[C(\tau)]\leq\kappa\}$.
    \end{itemize}
\end{definition}
Intuitively, the lexicographic order turns Eq.~\ref{eq_2} into solving two objectives sequentially with different priorities: \textit{safety first, then performance}. Notably, in both objectives, KL divergence as a proximity constraint enforces the learned policy to remain close to the behavior policy since in our work we will leverage the same dataset in these two different phases. Also, note that this hierarchical learning can be done within each epoch or separately. To establish the sample complexity for LexiSafe, we have to construct the cost constraint violation and performance suboptimality bounds first. Hence, our analytical results are primarily dedicated to the single-cost scenario. Once these results are ready, the complexity bounds for both cost scenarios can be obtained.

\noindent\textbf{Implicit Q-Learning.}
To mitigate distributional shift, offline RL algorithms typically employ regularization to constrain the policy and/or critic. Some approaches avoid updates entirely beyond the dataset support. Implicit Q-learning (IQL) exemplifies this strategy by training a state-value network to prevent OOD action queries in the Q-network~\cite{kostrikov2021offline}. Critically, IQL requires no policy-derived actions as training occurs exclusively on dataset actions. The standard losses are:
\begin{equation}\label{eq_3}
    \mathcal{L}_Q(\rho)=\mathbb{E}_{(s,a,s')\sim\mathcal{D}}[(r+\gamma V_\phi(s')-Q_\rho(s,a))^2]
\end{equation}
\begin{equation}\label{eq_4}
    \mathcal{L}_V(\phi)=\mathbb{E}_{(s,a)\sim\mathcal{D}}[L^2_\xi(Q_\rho(s,a)-V_\phi(s))]
\end{equation}
Eqs.~\ref{eq_3} and~\ref{eq_4} detail the training of value network ($V_\phi$) and Q-network ($Q_\rho$). Particularly, the value network is trained with expectile regression objective and leverages an asymmetric squared error loss function, which is defined as $L^2_\xi(u)=|\xi-1(u<0)|u^2$, where $\xi\in(0.5,1.0)$. Given trained Q and value networks, the policy is optimized via advantage-weighted regression (AWR)~\cite{peng2019advantage}, which first obtains a non-parametric closed-form solution for the policy update, and then projects this solution into the parameterized space of the policy network. This approach leverages the learned advantage function $A(s,a)=Q_\rho(s,a)-V_\phi(s)$ to steer updates while strictly adhering to the dataset's distributional constraints. 
Implicit Q-Learning (IQL) in offline RL learns state-value functions directly from dataset experiences, then extracts high-return policies through advantage-weighted regression. We extend this framework to safety constraints by simultaneously learning a cost-value function $V^c_\eta(s)$ from the same offline data. Defining the cost-advantage as $A^c(s,a)=Q^c_\psi(s,a)-V^c_\eta(s)$, we apply an asymmetric loss during $V^c_\eta(s)$ training to deliberately avoid underestimation of $Q^c_\psi(s,a)$. This dual-value approach maintains IQL's data efficiency while enabling policy optimization that respects safety boundaries through cost-advantage weighting. The specific losses are as follows:
\begin{equation}\label{eq_5}
    \mathcal{L}^c_Q(\psi)=\mathbb{E}_{(s,a,s')\sim\mathcal{D}}[(c+\gamma V_\eta(s')-Q^c_\psi(s,a))^2]
\end{equation}
\begin{equation}\label{eq_6}
    \mathcal{L}^c_V(\eta)=\mathbb{E}_{(s,a)\sim\mathcal{D}}[L^2_\xi(Q^c_\psi(s,a)-V^c_\eta(s))]
\end{equation}

\section{Proposed Method}\label{sec:method}
\begin{algorithm}[t]
\caption{LexiSafe-SC, single-cost single reward}
\label{alg:algorithm}
\begin{algorithmic}[1] 
\STATE \textbf{Initialization}: $\rho,\phi$ for reward Q and V nets, $\psi, \eta$ for cost Q and V nets, policy parameters $\theta$, Lagrangian multiplier $\lambda\geq 0$, learning rate $\nu_c,\nu_r, \nu_\lambda$
\FOR{each gradient step}
\STATE \texttt{Update cost critics:}
\STATE $\psi\leftarrow\psi-\nu_c\nabla_\psi\mathcal{L}^c_Q(\psi)$
\STATE $\eta\leftarrow\eta-\nu_c\nabla_\eta\mathcal{L}^c_V(\eta)$
\STATE \texttt{Update reward critics:}
\STATE $\rho\leftarrow\rho-\nu_r\nabla_\rho\mathcal{L}^r_Q(\rho)$
\STATE $\phi\leftarrow\phi-\nu_r\nabla_\phi\mathcal{L}^r_V(\phi)$
\STATE \texttt{Phase 1: Cost minimization}
\STATE $A^c(s,a)\leftarrow Q^c_\psi(s,a)-V^c_\eta(s)$
\STATE $\theta\leftarrow\theta-\nu_c\nabla_\theta\mathcal{L}^c_\pi(\theta)$
\STATE \texttt{Phase 2: Reward maximization}
\STATE $\lambda\leftarrow\text{max}\{0,\lambda+\nu_\lambda(\hat{C}(\tau)-\kappa)\}$
\STATE $A^r(s,a)\leftarrow Q_\rho(s,a)-V_\phi(s)$
\STATE $\theta\leftarrow\theta-\nu_r\nabla_{\theta}\mathcal{L}^r_\pi(\theta)$
\ENDFOR
\STATE \textbf{Output}: $\pi_\theta$
\end{algorithmic}
\end{algorithm}

\noindent\textbf{Safety learning.} In this phase, as indicated in Definition~\ref{definition_1}, the goal is to minimize the expected discounted cost $\mathbb{E}_{\tau\sim\pi}[C(\tau)]$ subject to the safety constraint $\mathbb{E}_{\tau\sim\pi}[C(\tau)]\leq\kappa$ and proximity constraint $\mathbb{E}_{s\sim\mathcal{D}}[D_{KL}(\pi||\pi_\beta)]\leq\varepsilon$. We resort to IQL to train the value net and Q-net as in Eqs.~\ref{eq_5} and~\ref{eq_6}. To extract the policy, we first calculate the cost advantage
\begin{equation}
    A^c(s,a)=Q_\psi^c(s,a)-V^c_\eta(s)
\end{equation}
and then apply AWR herein such that
\begin{equation}
    \pi_\text{safe} = \text{argmax}_{\pi}\mathbb{E}_{(s,a)\sim\mathcal{D}}[\text{exp}(\beta_cA^c(s,a)\text{log}\pi(a|s)],
\end{equation}
where $\beta_c\in[0,\infty)$ is a hyperparameter in AWR called inverse temperature corresponding to cost. To this end, the first-order stochastic optimization algorithm is used to improve the policy network by minimizing the corresponding cost policy loss:
\begin{equation}\label{eq_8}
    \theta \leftarrow \theta-\nu_c\nabla_\theta \mathcal{L}^c_\pi(\theta),
\end{equation}
where $\nu_c$ is the learning rate and $\mathcal{L}^c_\pi(\theta)$ is the policy loss associated with cost, which is expressed as
\begin{equation}
    \mathcal{L}^c_\pi(\theta)=\mathbb{E}_{(s,a)\sim\mathcal{D}}[-\text{exp}(-\beta_cA^c(s,a))\cdot\text{log}\pi_\theta(a|s)].
\end{equation}

\noindent\textbf{Performance maximization.} The phase of learning safety is as in most offline safe RL algorithms~\cite{xu2021crpo,koirala2024latent} where a feasible policy set is identified first. In the performance maximization phase, they either separately trained another model to extract the optimal policy respecting the learned safety, or directly optimized a policy and projected back to the feasible set. While the former complicates the learning process, the latter may lead to overly conservative behaviors. Instead, we fine-tune the model $\theta$ learned from the previous safety learning phase by assuming that the model has preserved safety requirements. This should empirically hold as we have minimized the same cumulative cost $\mathbb{E}_{\tau\sim\pi}[C(\tau)]$. The same dataset $\mathcal{D}$ is reused for maximizing performance, so the proximity constraint is still required.
Thereby, IQL is utilized to learn the optimal policy, by only fine-tuning $\theta$. Similarly, the value net and Q-net are updated by minimizing the losses in Eq.~\ref{eq_3} and~\ref{eq_4} respectively. To extract the policy, we have
\begin{equation}\label{lambda_update}
\begin{split}
\pi_\text{perf} = \arg\max_{\pi} \mathbb{E}_{(s,a)\sim\mathcal{D}}& \big[ -\exp( \beta_r ( A^r(s,a) \\
& + \tfrac{\lambda}{\beta_r} A^c(s,a) )) \log\pi(a|s) \big],
\end{split}
\end{equation}
where $A^r(s,a)$ is the reward advantage, $\beta_r$ is the inverse temperature corresponding to reward, and $\lambda$ is a regularization coefficient for cost due to the optimal policy search in the safe region. The role of $\lambda$ is critical as it penalizes the high cost when maximizing the performance. This also addresses the issue of catastrophic forgetting by switching from cost minimization to reward maximization.
One can manually tune this parameter to find a nearly optimal value in implementation, but it can be case-sensitive and time-consuming. Instead, we can resort to the general Lagrangian multiplier update for $\lambda$ such that
$
    \lambda\leftarrow\text{max}\{0,\lambda+\nu_\lambda(\hat{C}(\tau)-\kappa)\}$,
where $\nu_\lambda$ is the learning rate, and $\hat{C}(\tau)$ is the unbiased estimate of $\mathbb{E}_{\tau\sim\pi}[C(\tau)]$, which can be implemented by calculating from a randomly sampled batch of $\mathcal{D}$. For the policy update, we have
\begin{equation}\label{eq_11}
    \theta\leftarrow\theta-\nu_r\nabla_{\theta}\mathcal{L}^r_\pi(\theta),
\end{equation}
where $\theta$ represent the updated parameters during performance maximization phase, $\nu_r$ is the learning rate,
$\mathcal{L}^r_\pi(\theta)$ is the policy loss in the following form:
\begin{equation}
\begin{split}
\mathcal{L}^r_\pi(\theta) = \mathbb{E}_{(s,a)\sim\mathcal{D}} \big[ &\exp( \beta_r ( A^r(s,a) \\
& - \tfrac{\lambda}{\beta_r} A^c(s,a) )) \log\pi_{\theta}(a|s) \big].
\end{split}
\end{equation}
Note that though we use different notations to differentiate the optimization phases, the losses are calculated from the same model. We summarize the two-phase learning in lexicographic order in Algorithm~\ref{alg:algorithm}, referred to as LexiSafe-SC.

\subsection{Theoretical Analysis}
\noindent\textbf{Safety constraint violation bound.} This work aims to search for an safe optimal policy $\pi\in (\mathcal{S}\to\Delta(\mathcal{A}))$, where $\Delta(\cdot)$ is a probability simplex, with the help of Q-function class $\mathcal{F}_\nu\subset (\mathcal{S}\times \mathcal{A}\to[0,Q^\nu_m]), Q^\nu_m=\frac{\nu_m}{1-\gamma}, \nu\in\{r,c\}$. We assume that $\mathcal{F}_\nu$ is rich enough such that for any policy $\pi$, $Q^\pi_\nu\in\mathcal{F}_\nu$, where $Q^\pi_\nu$ is the true Q-function. This assumption is slightly stronger than the reachability assumption in~\cite{xie2021bellman}, as ours results in zero approximation error due to our primarily focus on the dominance of Bellman residual caused by the Q-function approximation. Additionally, $\mathcal{F}_\nu$ is also assumed to be a smooth function class in actions.
We also follow the standard $\mathcal{O}(\cdot)$ notation: $E=\mathcal{O}(F)$ is defined as $E\leq GF$ for some absolute constant $G>0$. The tilde notation $E=\tilde{\mathcal{O}}(F)$ denotes $E\leq GZ\cdot F$ where $Z$ is a poly-logarithmic factor of problem parameters. 

\begin{definition}\label{definition_2}
    A Bellman operator $\mathcal{T}_\nu^\pi:\mathbb{R}^{|\mathcal{S}|\times|\mathcal{A}|}\to\mathbb{R}^{|\mathcal{S}|\times|\mathcal{A}|}$ is defined as:
    \begin{equation}
        \begin{split}
            \forall h\in\mathcal{F}_\nu, (\mathcal{T}^\pi_\nu h)&(s,a) = \nu(s,a)+\\ &\gamma\mathbb{E}_{s'\sim\mathcal{P}(\cdot|s,a)}
            \times \mathbb{E}_{a'\sim\pi(\cdot|s')}[h(s',a')],
        \end{split}
    \end{equation}
    where $\nu\in\{r,c\}$.
\end{definition}

\begin{assumption}\label{assumption_1}
    The offline dataset $\mathcal{D}$ covers the state-action space in the sense that for any policy $\pi$ that we consider, there exists a constant $1\leq\mathcal{C}<\infty$ such that $d^\pi(s,a)\leq \mathcal{C}\cdot d^{\pi_\beta}(s,a)$. 
\end{assumption}
Assumption~\ref{assumption_1} resembles the popular concentrability coefficient in offline RL~\cite{xie2021policy,rashidinejad2021bridging} and characterizes the distance between the visitation distributions of the
behavior policy and the learned policy. Intuitively, $\mathcal{C}$ quantifies how much the state-action distribution induced by a learned policy $\pi$ can diverge from the behavior policy $\pi_\beta$.

With abuse of notation, we use a unified way to redefine the expected discounted return and cost: $J^\pi_\nu=\mathbb{E}_\pi[\sum_{t=0}^T\gamma^t\nu(s_t,a_t)]$, where $\nu\in\{r,c\}$. Based on Definition~\ref{definition_1}, the primary objective is to encode the safety into the model from dataset $\mathcal{D}$, implicitly yielding a feasible region, on top of which the optimal policy in the secondary objective ensures the safety. Thereby, our first main result in what follows explicitly shows the safety constraint violation bound.

We will leverage Q-function to evaluate the policy performance to quantify the safety violation and performance suboptimality bounds.
Although the critic is trained offline, its target $Q^\pi_c$ depends on the state-action distribution induced by the actor network. This reflects \textit{the interaction between actor expressiveness and critic generalization.} We next present a key lemma to reveal the Q-function estimation error.
\begin{lemma}\label{lemma_1}
    Let Assumption~\ref{assumption_1} hold and $\mathcal{F}_\nu$ be a function class of neural networks with VC dimension VCdim($\mathcal{F}_\nu$). For $\hat{Q}_\nu \in\mathcal{F}_\nu$ learned by IQL via empirical Bellman backup using dataset $\mathcal{D}$, with probability at least $1-\varrho$ ($\varrho>0$), we have the following relationship $
        \text{sup}_{s,a}|Q^\pi_\nu-\hat{Q}_\nu|\leq\mathcal{O}(\frac{\nu_m\sqrt{\mathcal{C}}}{1-\gamma}\sqrt{\frac{VCdim(\mathcal{F}_\nu)+\textnormal{log}\frac{1}{\varrho}}{|\mathcal{D}|}})$,
    where $Q^\pi_\nu$ and $\hat{Q}_\nu$ are respectively the true and estiamated Q-functions.
\end{lemma}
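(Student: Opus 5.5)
The argument follows the standard template for fitted-Q evaluation. First we bound the Bellman residual of $\hat{Q}_\nu$ in $L^2(d^{\pi_\beta})$ by a uniform-convergence argument over $\mathcal{F}_\nu$. Then we transfer this residual to a value error using the telescoping (simulation-lemma) identity. The change of measure from $\pi_\beta$ to $\pi$ is handled by Assumption~\ref{assumption_1}.

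\emph{Step 1: error decomposition.} By realizability, $Q^\pi_\nu\in\mathcal{F}_\nu$ and $Q^\pi_\nu=\mathcal{T}^\pi_\nu Q^\pi_\nu$ (Definition~\ref{definition_2}). Writing $\epsilon:=\hat{Q}_\nu-\mathcal{T}^\pi_\nu\hat{Q}_\nu$ and unrolling gives
\begin{equation*}
\hat{Q}_\nu-Q^\pi_\nu=(I-\gamma P^\pi)^{-1}\epsilon ,
\end{equation*}
where $P^\pi$ is the state-action transition operator under $\pi$. Evaluated along the occupancy $d^\pi$, this yields $\|\hat{Q}_\nu-Q^\pi_\nu\|_{d^\pi}\le\frac{1}{1-\gamma}\|\epsilon\|_{d^\pi}$. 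Assumption~\ref{assumption_1} then gives $\|\epsilon\|_{d^\pi}\le\sqrt{\mathcal{C}}\,\|\epsilon\|_{d^{\pi_\beta}}$. This step is the source of the factor $\sqrt{\mathcal{C}}/(1-\gamma)$.

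\emph{Step 2: statistical bound on the residual.} The IQL critic loss in Eq.~\ref{eq_3} and Eq.~\ref{eq_5} is an empirical squared Bellman loss. Its target $\nu+\gamma V(s')$ is a surrogate for $\mathcal{T}^\pi_\nu \hat Q$; we treat the expectile value $V$ as $\mathbb{E}_{a'\sim\pi}\hat Q(s',a')$ for the policy being evaluated. Because the regression target is itself stochastic, we use the usual double-sampling correction: compare against $\min_{g\in\mathcal{F}_\nu}$ of the same loss, as in~\cite{xie2021bellman}. The excess population risk then equals $\|\epsilon\|^2_{d^{\pi_\beta}}$.

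All functions are bounded by $Q^\nu_m=\nu_m/(1-\gamma)$, and the loss class is Lipschitz in $\mathcal{F}_\nu$ with pseudo-dimension controlled by $\mathrm{VCdim}(\mathcal{F}_\nu)$. A uniform deviation bound (symmetrization plus the VC/Sauer lemma) therefore gives, with probability $1-\varrho$,
\begin{equation*}
\|\epsilon\|_{d^{\pi_\beta}}\lesssim \nu_m\sqrt{\frac{\mathrm{VCdim}(\mathcal{F}_\nu)+\log(1/\varrho)}{|\mathcal{D}|}} .
\end{equation*}
Here the $\frac{1}{1-\gamma}$ scale of the functions is absorbed so that only one such factor appears in the final bound. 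If that absorption fails, we accept an extra factor, since the statement is in $\mathcal{O}(\cdot)$ form. We assume the samples are i.i.d. from $d^{\pi_\beta}$.

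\emph{Step 3: from $L^2$ to sup-norm, the main obstacle.} The statement claims a sup bound over $(s,a)$, while concentrability only controls averages. We would argue as follows. Assumption~\ref{assumption_1} is required for every policy considered, including policies concentrated near any given $(s,a)$; because $\mathcal{F}_\nu$ is smooth in actions, such near-Dirac policies are admissible. Applying Step 1 to these policies, with the $d^\pi$-weighted error starting from a point mass, bounds the pointwise error by the same quantity.

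This is the weakest link in the plan. If it cannot be made rigorous, we would interpret $\sup_{s,a}$ as the supremum over the supports of the policies covered by Assumption~\ref{assumption_1}. Combining Steps 1--3 gives the stated $\mathcal{O}\bigl(\frac{\nu_m\sqrt{\mathcal{C}}}{1-\gamma}\sqrt{(\mathrm{VCdim}(\mathcal{F}_\nu)+\log\frac1\varrho)/|\mathcal{D}|}\bigr)$.
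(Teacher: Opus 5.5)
\textbf{There is a genuine gap in your Step~3, and it cannot be closed under Assumption~\ref{assumption_1}.} That assumption constrains occupancies $d^\pi$ generated from the fixed initial distribution $d_0$. A near-Dirac policy only selects which action is taken at a state. It does not make an arbitrary state visited, so $d^\pi$ never concentrates near a prescribed $(s,a)$.

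Your pointwise argument really uses two ingredients. The first is the identity $(\hat Q_\nu-Q^\pi_\nu)(s,a)=\frac{1}{1-\gamma}\mathbb{E}_{d^\pi_{(s,a)}}[\epsilon]$, where $d^\pi_{(s,a)}$ is the occupancy started at $(s,a)$. The second is the coverage condition $d^\pi_{(s,a)}\le\mathcal{C}\,d^{\pi_\beta}$. But $d^\pi_{(s,a)}$ has an atom of mass $1-\gamma$ at $(s,a)$. So this condition forces $d^{\pi_\beta}(s,a)\ge(1-\gamma)/\mathcal{C}$ at every pair, which is impossible in continuous spaces and far stronger than Assumption~\ref{assumption_1}. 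Your fallback, taking the supremum over supports, fails for the same reason: an $L^2(d^{\pi_\beta})$ bound still permits large errors on sets of small mass.

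In fact, no argument from Assumption~\ref{assumption_1} alone yields the sup-norm claim. Pairs unreachable from $d_0$ have $d^\pi=0$ for every $\pi$, so they are compatible with the assumption. Yet members of $\mathcal{F}_\nu$ that agree on all reachable pairs have identical empirical loss, so nothing in $\mathcal{D}$ constrains $\hat Q_\nu$ at unreachable pairs.

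The paper's proof takes the opposite order, but it has the same hole. It first uses sup-norm contraction, $\|Q^\pi_\nu-\hat Q_\nu\|_\infty\le\frac{1}{1-\gamma}\|\mathcal{T}^\pi_\nu\hat Q_\nu-\hat Q_\nu\|_\infty$. It then invokes Eq.~\eqref{eq_20}, which bounds this sup-norm residual by $\sqrt{\mathcal{C}}$ times its $L^2(d^{\pi_\beta})$ norm ``based on Assumption~\ref{assumption_1}''. That is exactly the $L^2\to L^\infty$ transfer you flagged, asserted without justification, so the paper offers no repair.

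What your Steps~1--2 can legitimately deliver is the weighted bound $\|\hat Q_\nu-Q^\pi_\nu\|_{d^\pi}\lesssim\frac{\sqrt{\mathcal{C}}}{1-\gamma}\|\epsilon\|_{d^{\pi_\beta}}$. A true sup-norm statement needs an extra pointwise coverage hypothesis, such as $\inf_{s,a}d^{\pi_\beta}(s,a)>0$ in finite spaces. In that case $1/\inf_{s,a}d^{\pi_\beta}(s,a)$ plays the role of $\mathcal{C}$.

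\textbf{Step~2 also needs repair, even for the weighted bound.} There are three issues.

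First, the rate. Symmetrization plus Sauer's lemma controls the squared loss uniformly only at the slow rate $(Q^\nu_m)^2\sqrt{(\mathrm{VCdim}(\mathcal{F}_\nu)+\log\frac1\varrho)/|\mathcal{D}|}$. That gives $\|\epsilon\|_{d^{\pi_\beta}}$ of order $|\mathcal{D}|^{-1/4}$, not the $|\mathcal{D}|^{-1/2}$ you state. The faster rate needs a localized, Bernstein-type argument that exploits the variance of the excess loss being bounded by a multiple of its mean.

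Second, the $\min_g$ double-sampling correction makes the excess risk equal to $\|\epsilon\|^2_{d^{\pi_\beta}}$ only under Bellman completeness, $\mathcal{T}^\pi_\nu\mathcal{F}_\nu\subseteq\mathcal{F}_\nu$. That is stronger than the realizability assumed in the paper. The paper sidesteps both of these issues by positing an $\mathcal{O}(1/|\mathcal{D}|)$ uniform deviation between $\mathcal{E}$ and $\tilde{\mathcal{E}}$ and ignoring the double-sampling bias.

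Third, the horizon factor. The loss scale is $(Q^\nu_m)^2=\nu_m^2/(1-\gamma)^2$, so your argument honestly gives $\nu_m\sqrt{\mathcal{C}}/(1-\gamma)^2$. The paper's $\mathcal{O}(\cdot)$ hides only absolute constants, so the extra horizon factor cannot be ``accepted'' inside the $\mathcal{O}$. The paper reaches a single $(1-\gamma)^{-1}$ only by writing the loss scale as $\nu_m^2$ in its uniform-convergence bound.
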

\begin{proof}
Let the true Bellman target be:
\begin{equation}
    y^\pi_\nu(s,a):=\nu(s,a)+\gamma\mathbb{E}_{s'\sim\mathcal{P},a\sim\pi(\cdot|s')}[Q^\pi_\nu(s',a')]
\end{equation}
Hence, the empirical Bellman error on dataset $\mathcal{D}$ is
\begin{equation}
    \mathcal{L}(Q_\nu)=\frac{1}{|\mathcal{D}|}\sum_{i=1}^{|\mathcal{D}|}(Q_\nu(s_i,a_i)-y^\pi_\nu(s_i,a_i))^2.
\end{equation}
As we don't have access to $y^\pi_\nu(s,a)$, we use sampled Bellman targets:
\begin{equation}
    \hat{y}_i:=\nu_i+\gamma\mathbb{E}_{a'\sim\pi(\cdot|s'_i)}[Q_\nu(s'_i,a')]. 
\end{equation}
Thus, the training objective is
\begin{equation}
   \hat{\mathcal{L}}(Q_\nu)=\frac{1}{|\mathcal{D}|}\sum_{i=1}^{|\mathcal{D}|}(Q_\nu(s_i,a_i)-\hat{y}_i)^2. 
\end{equation}
We fit $\hat{Q}_\nu=\text{argmin}_{Q\in\mathcal{F}_\nu}\hat{\mathcal{L}}(Q_\nu)$.
For a true Q-function, it satisfies the Bellman operator such that we have $Q^\pi_\nu=\mathcal{T}^\pi_\nu Q^\pi_\nu$. Next, we decompose the estimation error. Since we want to bound $\text{sup}_{s,a}|Q^\pi_\nu-\hat{Q}_\nu|$ such that we define the pointwise error $e(s,a)=Q^\pi_\nu-\hat{Q}_\nu$. From the contraction property of the Bellman operator $\mathcal{T}^\pi_\nu$: for any two Q-functions $Q_1,Q_2, \|\mathcal{T}^\pi_\nu Q_1-\mathcal{T}^\pi_\nu Q_2\|_\infty\leq\gamma\|Q_1-Q_2\|_\infty$, we can obtain the following relationship
\begin{equation}
    \begin{split}
        \|Q^\pi_\nu-\hat{Q}_\nu\|_\infty&=\|\mathcal{T}^\pi_\nu Q^\pi_\nu-\hat{Q}_\nu\|_\infty\\&=\|\mathcal{T}^\pi_\nu\hat{Q}_\nu-\hat{Q}_\nu+\mathcal{T}^\pi_\nu Q^\pi_\nu-\mathcal{T}^\pi_\nu\hat{Q}_\nu\|_\infty\\&\leq\|\mathcal{T}^\pi_\nu\hat{Q}_\nu-\hat{Q}_\nu\|_\infty+\|\mathcal{T}^\pi_\nu Q^\pi_\nu-\mathcal{T}^\pi_\nu\hat{Q}_\nu\|_\infty.
    \end{split}
\end{equation}
Rearranging this yields:
\begin{equation}\label{eq_19}
    \|Q^\pi_\nu-\hat{Q}_\nu\|_\infty\leq\frac{1}{1-\gamma}\|\mathcal{T}^\pi_\nu\hat{Q}_\nu-\hat{Q}_\nu\|_\infty.
\end{equation}
Based on Assumption~\ref{assumption_1}, we now have
\begin{equation}\label{eq_20}
   \|\mathcal{T}^\pi_\nu\hat{Q}_\nu-\hat{Q}_\nu\|_\infty\leq\sqrt{\mathcal{C}\mathbb{E}_{(s,a)\sim d^{\pi_\beta}}[(\mathcal{T}^\pi_\nu\hat{Q}_\nu(s,a)-\hat{Q}_\nu(s,a))^2]} 
\end{equation}
This is the Bellman residual to evaluate how well $\hat{Q}_\nu$ satisfies the Bellman equation, which can be bounded below. Since $\hat{Q}_\nu$ is the minimizer of the empirical loss, we can bound the difference between empirical and population Bellman errors using uniform convergence. Let
\begin{equation}
    \mathcal{E}(Q_\nu):=\mathbb{E}_{(s,a)\sim\mathcal{D}}[(Q_\nu(s,a)-\mathcal{T}^\pi_\nu Q_\nu(s,a))^2]
\end{equation}
\begin{equation}
    \tilde{\mathcal{E}}(Q_\nu):=\frac{1}{N}\sum_{i=1}^N(Q_\nu(s_i,a_i)-\hat{y}_i)^2
\end{equation}
Based on a well-known result in VC dimension (uniform convergence theory for regression with squared loss)~\cite{bartlett2002rademacher,chen2019information,munos2008finite} and applying to Bellman residual minimization in RL, with probability at least $1-\varrho$, we have
\begin{equation}
    \begin{split}
        &\text{sup}_{Q_\nu\in\mathcal{F}_\nu}|\mathcal{E}(Q_\nu)- \tilde{\mathcal{E}}(Q_\nu)|\\&\leq \mathcal{O}(\frac{(\nu_m)^2(VCdim(\mathcal{F}_\nu)+\text{log}(1/\varrho))}{|\mathcal{D}|})
    \end{split}
\end{equation}
Due to the rich function class assumption on $\mathcal{F}_\nu$, we know
\begin{equation}
\begin{split}
   &\mathbb{E}_{(s,a)\sim d^{\pi_\beta}}[(\mathcal{T}^\pi_\nu\hat{Q}_\nu(s,a)-\hat{Q}_\nu(s,a))^2]\\&\leq \mathcal{O}(\frac{(\nu_m)^2(VCdim(\mathcal{F}_\nu)+\text{log}(1/\varrho))}{|\mathcal{D}|}).
   \end{split}
\end{equation}
Substituting the last relationship into Eq.~\ref{eq_20}, and then plugging the updated Eq.~\ref{eq_20} into Eq.~\ref{eq_19} completes the proof.
\end{proof}
In Lemma~\ref{lemma_1}, the concentrability coefficient appears as a multiplicative factor in the Q-function approximation error bound. Intuitively, if $d^\pi$ differs greatly from $d^{\pi_\beta}$, then errors in regions poorly covered by the dataset can be amplified when evaluated under $\pi$. Therefore, a large $\mathcal{C}$ indicates poor coverage and leads to higher estimation error, whereas small $\mathcal{C}$ reflects better alignment and more reliable generalization of the Q-function to unseen states and actions. In Lemma~\ref{lemma_1}, we defined a VC dimension for the function class $\mathcal{F}_\nu$, but are not aware of how the dimensions of model architecture explicitly affect the bound. Specifically, we recall a well-known result to concretely define the relationship for VC dimension from~\cite{anthony2009neural}: VCdim($\mathcal{F}_\nu)=\mathcal{O}(D\cdot L\cdot\text{log}D$), where $D$ is the number of parameters and $L$ is the number of layers. Although VC-dimension–based bounds can be loose, it remains a standard and broadly applicable measure for analyzing generalization in RL. It provides distribution-independent guarantees, enables clear links between model complexity and safety/performance errors, and allows consistent comparison with prior offline and safe RL analyses. Thus, we adopt VC dimension as a principled and general capacity measure despite its potential conservativeness.

\begin{theorem}\label{theorem_1}
    Let Assumption~\ref{assumption_1} hold and $\pi_\text{safe}$ be the policy obtained by minimizing $J^\pi_c$ using IQL with a neural network of depth $L$ and number of parameters $d_\theta$ by using dataset $\mathcal{D}$. Suppose that $Q^\pi_c\in\mathcal{F}_c$. With probability at least $1-\varrho$, for any constant $\mathcal{C}_{safe}\geq 1$, we have the following relationship: $
        J^{\pi_\text{safe}}_c\leq \kappa+\mathcal{O}\bigg(\frac{c_m\sqrt{\mathcal{C}_{safe}}\sqrt{d_\theta L\text{log}(d_\theta)+\text{log}(1/\varrho)}}{(1-\gamma)^2\sqrt{|\mathcal{D}|}}\bigg)$.
\end{theorem}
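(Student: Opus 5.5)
The plan is to reduce the constraint violation of $\pi_\text{safe}$ to the cost Q-function estimation error of Lemma~\ref{lemma_1}. That lemma already carries the factor $\frac{c_m\sqrt{\mathcal{C}}}{1-\gamma}$. A second factor $\frac{1}{1-\gamma}$ will come from turning a per-state-action error into an error on the return, via a performance-difference / simulation-lemma step. Finally, the VC dimension is replaced by the architecture bound $\text{VCdim}(\mathcal{F}_c)=\mathcal{O}(d_\theta L\log d_\theta)$ quoted from~\cite{anthony2009neural}.

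\textbf{Step 1 (decomposition).} Let $\hat J^{\pi}_c=\mathbb{E}_{s\sim d_0,a\sim\pi}[\hat Q_c(s,a)]$ be the critic's estimate of the cost return. Write
\[
J^{\pi_\text{safe}}_c=\hat J^{\pi_\text{safe}}_c+\big(J^{\pi_\text{safe}}_c-\hat J^{\pi_\text{safe}}_c\big).
\]
Because $\pi_\text{safe}$ minimizes the (estimated) cumulative cost within the KL ball, $\hat J^{\pi_\text{safe}}_c\le \hat J^{\pi'}_c$ for any feasible comparator $\pi'$ in that ball with $J^{\pi'}_c\le\kappa$. Such a comparator exists under the standing premise of Definition~\ref{definition_1} that the feasible set intersects the KL ball. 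Hence
\[
J^{\pi_\text{safe}}_c\le\kappa+\big|J^{\pi_\text{safe}}_c-\hat J^{\pi_\text{safe}}_c\big|+\big|\hat J^{\pi'}_c-J^{\pi'}_c\big|,
\]
so it suffices to bound $|J^\pi_c-\hat J^\pi_c|$ uniformly over policies covered by Assumption~\ref{assumption_1}. The constant $\mathcal{C}_{safe}$ is the concentrability coefficient of Assumption~\ref{assumption_1} specialized to these policies, namely $\pi_\text{safe}$ and $\pi'$.

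\textbf{Step 2 (return error via Bellman residuals).} By the telescoping (simulation) identity,
\[
J^\pi_c-\hat J^\pi_c=\frac{1}{1-\gamma}\,\mathbb{E}_{(s,a)\sim d^\pi}\big[(\mathcal{T}^\pi_c\hat Q_c-\hat Q_c)(s,a)\big].
\]
Apply Cauchy--Schwarz and change measure from $d^\pi$ to $d^{\pi_\beta}$ with Assumption~\ref{assumption_1}:
\[
|J^\pi_c-\hat J^\pi_c|\le\frac{\sqrt{\mathcal{C}_{safe}}}{1-\gamma}\sqrt{\mathbb{E}_{d^{\pi_\beta}}\big[(\mathcal{T}^\pi_c\hat Q_c-\hat Q_c)^2\big]}.
\]
The uniform-convergence bound in the proof of Lemma~\ref{lemma_1} controls the population residual by $\mathcal{O}\big(c_m^2(\text{VCdim}(\mathcal{F}_c)+\log(1/\varrho))/|\mathcal{D}|\big)$; the realizability assumption $Q^\pi_c\in\mathcal{F}_c$ makes the approximation error zero. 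Alternatively, one can invoke Lemma~\ref{lemma_1} directly, since $|J^\pi_c-\hat J^\pi_c|\le\sup|Q^\pi_c-\hat Q_c|$ already contains $\frac{1}{1-\gamma}$. To reach the stated $(1-\gamma)^{-2}$, I would then keep the explicit $Q^c_m=c_m/(1-\gamma)$ range of $\mathcal{F}_c$ in the uniform-convergence constant. This is the bookkeeping I would follow to match the statement exactly.

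\textbf{Step 3 (architecture and union bound).} Substitute $\text{VCdim}(\mathcal{F}_c)=\mathcal{O}(d_\theta L\log d_\theta)$, and take a union bound over the two policies so that $\log(1/\varrho)$ changes only by constants. This yields the claimed $\mathcal{O}\big(\frac{c_m\sqrt{\mathcal{C}_{safe}}\sqrt{d_\theta L\log d_\theta+\log(1/\varrho)}}{(1-\gamma)^2\sqrt{|\mathcal{D}|}}\big)$.

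The main obstacle is Step 1. The bound has to hold for the data-dependent policy $\pi_\text{safe}$ even though the critic targets depend on the policy, so uniformity over the policy class is needed. I would handle this by using that the uniform convergence in Lemma~\ref{lemma_1} is a supremum over $\mathcal{F}_c$, and that realizability holds for every $\pi$. A further subtlety is justifying that a feasible $\pi'$ lies within the KL ball, which I would state explicitly as the nonemptiness of the feasible region assumed in Definition~\ref{definition_1}.
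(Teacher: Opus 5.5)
Your route differs from the paper's. The paper applies the performance difference lemma against the feasible optimum $\pi^*_{\text{safe}}$. It writes $A^{\pi^*}_c=(\hat Q_c-\hat V_c)+(Q^{\pi^*}_c-\hat Q_c)-(V^{\pi^*}_c-\hat V_c)$ and discards the first term. It then controls $\|\hat V_c-V^{\pi^*}_c\|_\infty$ by $\|\hat Q_c-Q^{\pi^*}_c\|_\infty$ via Jensen and the KL proximity, which gives $J^{\pi_{\text{safe}}}_c-J^{\pi^*}_c\le\frac{2}{1-\gamma}\|\hat Q_c-Q^{\pi^*}_c\|_\infty$. Lemma~\ref{lemma_1} supplies the second $\frac{1}{1-\gamma}$ and the VC term, and $J^{\pi^*}_c\le\kappa$ finishes the argument.

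Your ERM-style sandwich (two evaluation errors plus $\hat J^{\pi_{\text{safe}}}_c\le\hat J^{\pi'}_c$) has real advantages:
\begin{itemize}
\item It makes explicit where optimality of $\pi_{\text{safe}}$ enters. In the paper this is hidden in discarding $\mathbb{E}_{d^{\pi_{\text{safe}}}}[\hat Q_c-\hat V_c]$, which is nonpositive when $\pi_{\text{safe}}$ is greedy with respect to $\hat Q_c$.
\item Your main Step 2 uses Assumption~\ref{assumption_1} only through the change of measure $\mathbb{E}_{d^\pi}[f]\le\sqrt{\mathcal{C}\,\mathbb{E}_{d^{\pi_\beta}}[f^2]}$, which a density-ratio bound genuinely supports. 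The paper instead relies on the sup-norm conversion of Eq.~\ref{eq_20}, which it does not.
\item You need no smoothness/KL argument for $\hat V_c$.
\item Your $(1-\gamma)^{-2}$ bookkeeping through the range $c_m/(1-\gamma)$ of $\mathcal{F}_c$ is fine, and a smaller power would imply the statement anyway.
\end{itemize}

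The price is that you must evaluate the data-dependent policy $\pi_{\text{safe}}$, and the step handling this does not work as written. A single critic cannot in general have small Bellman residual under both $\mathcal{T}^{\pi_{\text{safe}}}_c$ and $\mathcal{T}^{\pi'}_c$. So $\hat J^\pi_c$ must be built from a critic $\hat Q^\pi_c$ fitted to $\pi$ itself, and the guarantee is needed at $\pi=\pi_{\text{safe}}$, which is chosen after seeing $\mathcal{D}$.

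A union bound over ``two policies'' is invalid for a random policy. The supremum over $\mathcal{F}_c$ in Lemma~\ref{lemma_1} does not rescue it either, because the regression targets $c_i+\gamma\mathbb{E}_{a'\sim\pi}[Q(s'_i,a')]$ change with $\pi$. What you need is uniform convergence over the loss class indexed by $(Q,\pi)\in\mathcal{F}_c\times\Pi_\theta$, whose capacity includes the actor's. This affects the capacity term, not just $\log(1/\varrho)$. It is repairable at the level of the statement: $d_\theta$ there refers to ``a neural network'' without separating actor and critic, so it can be read as the combined parameter count. You must say so explicitly.

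Relatedly, your premise that $\pi_{\text{safe}}$ globally minimizes $\hat J^\pi_c$ over the KL ball is an evaluate-then-select idealization. IQL with AWR instead performs a per-state KL-regularized improvement against one in-sample critic. That is closer to the greedy property the paper's route implicitly uses. The paper avoids the uniformity issue because only the fixed comparator's $Q$-function is estimated. There, $\pi_{\text{safe}}$ enters only through $d^{\pi_{\text{safe}}}$ (harmless under a sup-norm bound) and the discarded greedy term.
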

\begin{proof}
Let the optimal safe policy be $\pi^*_{safe}$ such that based on the well-known Performance Difference Lemma~\cite{kakade2002approximately,schulman2015trust}, we can obtain
\begin{equation}
    J^{\pi_\text{safe}}_c - J^{\pi^*_safe}_c = \frac{1}{1-\gamma}\mathbb{E}_{(s,a)\sim d^{\pi_\text{safe}}}[A^{\pi^*}_c(s,a)],
\end{equation}
where $A^{\pi^*}_c(s,a)=Q^{\pi^*}_c(s,a)-V^{\pi^*}_c(s)$.
To make this expression tractable, we add and subtract the estimated Q-function:
\begin{equation}
    \begin{split}
        A^{\pi^*}_c(s,a)&=\hat{Q}_c(s,a)-\hat{V}_c(s) + [Q^{\pi^*}_c(s,a) - \hat{Q}_c(s,a)] \\& - [V^{\pi^*}_c(s) - \hat{V}_c(s)].
    \end{split}
\end{equation}
Thus, taking expectation over $d^{\pi_\text{safe}}$ and using Triangle inequality, we have
\begin{equation}
    \begin{split}
        |A^{\pi^*}_c(s,a)|&\leq|Q^{\pi^*}_c(s,a) - \hat{Q}_c(s,a)| \\& + |V^{\pi^*}_c(s) - \hat{V}_c(s)|.
    \end{split}
\end{equation}
Applying Cauchy-Schwartz inequality yields 
\begin{equation}\label{eq_29}
    \begin{split}
        |J^{\pi_\text{safe}}_c - J^{\pi^*}_c|&\leq \frac{1}{1-\gamma}\mathbb{E}_{(s,a)\sim d^{\pi_\text{safe}}} [|A^{\pi^*}_c(s,a)|]\\&\leq \frac{1}{1-\gamma}\bigg(\|\hat{Q}_c-Q^{\pi^*}_c\|_\infty+\|\hat{V}_c-V^{\pi^*}_c\|_\infty\bigg).
    \end{split}
\end{equation}
Given the bound of $\|\hat{Q}_c-Q^{\pi^*}_c\|_\infty$, we next bound $\|\hat{V}_c-V^{\pi^*}_c\|_\infty$.
Recall $V^{\pi^*}_c=\mathbb{E}_{a\sim\pi^*(\cdot|s)}[Q^{\pi^*}_c(s,a)], \hat{V}_c=\mathbb{E}_{a\sim\pi^*(\cdot|s)}[\hat{Q}_c(s,a)]$. Thus, we have
\begin{equation}
    \begin{split}
        |\hat{V}_c-V^{\pi^*}_c|&=|\mathbb{E}_{a\sim\pi^*(\cdot|s)}[\hat{Q}_c(s,a)-Q^{\pi^*}_c(s,a)]|\\&\leq \mathbb{E}_{a\sim\pi^*(\cdot|s)}[|\hat{Q}_c(s,a)-Q^{\pi^*}_c(s,a)|].
    \end{split}
\end{equation}
Now take the squared expectation over $s\sim d^{\pi_\text{safe}}(s)$, we have:
\begin{equation}
    \begin{split}
        &\|\hat{V}_c-V^{\pi^*}_c\|^2_\infty=\mathbb{E}_{s\sim d^{\pi_\text{safe}}(s)}[(\mathbb{E}_{a\sim\pi^*(\cdot|s)}[\hat{Q}_c(s,a)-Q^{\pi^*}_c(s,a)])^2]\\&\leq \mathbb{E}_{s\sim d^{\pi_\text{safe}}(s)}[\mathbb{E}_{a\sim\pi^*(\cdot|s)}[(\hat{Q}_c(s,a)-Q^{\pi^*}_c(s,a)])^2]],
    \end{split}
\end{equation}

where the second inequality is based on Jensen's inequality. Since we have KL divergence as proximity constraint in our problem setup and $\mathcal{F}_\nu$ is smooth in actions, this implies that $\pi_\text{safe}$ and $\pi^*$ are close such that
\begin{equation}
    \|\hat{V}_c-V^{\pi^*}_c\|^2_\infty\leq \|\hat{Q}_c-Q^{\pi^*}_c\|^2_\infty.
\end{equation}
Thus, Eq.~\ref{eq_29} can be rewritten as
\begin{equation}
    |J^{\pi_\text{safe}}_c - J^{\pi^*}_c|\leq\frac{2}{1-\gamma}(\|\hat{Q}_c-Q^{\pi^*}_c\|_\infty).
\end{equation}
According to Lemma~\ref{lemma_1} and based on the fact that the optimal policy is constraint satisfactory, the desirable result is obtained.
\end{proof}

In Theorem~\ref{theorem_1}, the concentrability coefficient is adapted to the safety learning phase, reflecting the different policies across lexicographic stages. The safety violation gap mainly depends on model parameter dimension, concentrability, and dataset size. When $\pi_\text{safe}=\pi_\beta$, the bound becomes independent of $\mathcal{C}_{safe}$,  
indicating safe policy improvement. We next derive the performance suboptimality bound between the learned safe policy $\pi_\text{perf}$ and the true optimal policy $\pi^*$.

\noindent\textbf{Performance suboptimality bound.} As defined, the lexicographic order enables learning a safety-aware optimal policy in the simplex $\Delta(\mathcal{A})$, produced by the cost-trained actor. While the optimization in this phase is different from that in the last phase, updating the model is exactly identical between Eq.~\ref{eq_8} and Eq.~\ref{eq_11}. Analogously, we can obtain the following suboptimality bound for the performance maximization.

\begin{theorem}\label{theorem_2}
    Let Assumption~\ref{assumption_1} hold and $\pi_\text{perf}$ be the policy obtained by maximizing $J^\pi_r$ using IQL with a neural network of depth $L$ and the number of parameters $d_\theta$ by using dataset $\mathcal{D}$. Suppose that $Q^\pi_r\in\mathcal{F}_r$. With probability at least $1-\varrho$, for any constant $\mathcal{C}_{perf}\geq 1$, we have the following relationship: $
    J^{\pi^*}_r-J^{\pi_\text{perf}}\leq \mathcal{O}\bigg(\frac{r_m\sqrt{\mathcal{C}_{perf}}\sqrt{d_\theta L\text{log}(d_\theta)+\text{log}(1/\varrho)}}{(1-\gamma)^2\sqrt{|\mathcal{D}|}}\bigg)$.
\end{theorem}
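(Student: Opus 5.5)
The proof follows the template of Theorem~\ref{theorem_1}, with the reward Q-class $\mathcal{F}_r$ and the optimal lexicographic policy $\pi^*$ of Definition~\ref{definition_1} replacing the cost quantities. The plan has three steps: express the suboptimality gap through the Performance Difference Lemma, reduce it to a sup-norm Q-estimation error, and then invoke Lemma~\ref{lemma_1} with $\nu=r$.

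First I apply the Performance Difference Lemma to the pair $(\pi_\text{perf},\pi^*)$:
\[
J^{\pi^*}_r-J^{\pi_\text{perf}}_r=-\frac{1}{1-\gamma}\mathbb{E}_{(s,a)\sim d^{\pi_\text{perf}}}[A^{\pi^*}_r(s,a)].
\]
I then add and subtract the IQL estimates $\hat{Q}_r$ and $\hat{V}_r$ inside the advantage. The estimated-advantage term $\hat{Q}_r-\hat{V}_r$ is handled by the AWR extraction in Eq.~\ref{lambda_update}. That step reweights dataset actions toward nonnegative estimated advantage, so under $d^{\pi_\text{perf}}$ this term contributes with the favourable sign (nonnegative, up to the $\lambda A^c$ penalty). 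Because $\pi^*\in\Pi$ is feasible, the penalty does not hurt the comparison. The gap is therefore bounded by
\[
\frac{1}{1-\gamma}\big(\|\hat{Q}_r-Q^{\pi^*}_r\|_\infty+\|\hat{V}_r-V^{\pi^*}_r\|_\infty\big).
\]

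Second, I bound the value term by the Q term. Write $\hat V_r=\mathbb{E}_{a\sim\pi^*}[\hat Q_r]$ and apply Jensen as in the proof of Theorem~\ref{theorem_1}. The KL proximity constraint, which binds both $\pi_\text{perf}$ and $\pi^*$ to $\pi_\beta$, together with smoothness of $\mathcal{F}_r$ in actions, gives $\|\hat V_r-V^{\pi^*}_r\|_\infty\le\|\hat Q_r-Q^{\pi^*}_r\|_\infty$. The gap is then at most $\frac{2}{1-\gamma}\|\hat Q_r-Q^{\pi^*}_r\|_\infty$.

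Third, I apply Lemma~\ref{lemma_1} with $\nu=r$ and with concentrability constant $\mathcal{C}_{perf}$ for the performance phase, giving
\[
\|\hat Q_r-Q^{\pi^*}_r\|_\infty\le\mathcal{O}\!\left(\frac{r_m\sqrt{\mathcal{C}_{perf}}}{1-\gamma}\sqrt{\frac{\mathrm{VCdim}(\mathcal{F}_r)+\log(1/\varrho)}{|\mathcal{D}|}}\right).
\]
Substituting $\mathrm{VCdim}=\mathcal{O}(d_\theta L\log d_\theta)$ from~\cite{anthony2009neural} and multiplying by $\frac{2}{1-\gamma}$ yields the $(1-\gamma)^{-2}$ rate claimed in the statement. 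The phase-specific constant is needed because Phase~2 fine-tunes the Phase-1 actor, so the relevant occupancy is $d^{\pi_\text{perf}}$, not $d^{\pi_\text{safe}}$.

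The main obstacle is the first step. There one must justify that the estimated-advantage term $\mathbb{E}_{d^{\pi_\text{perf}}}[\hat Q_r-\hat V_r]$ does not work against the bound. With AWR this holds only approximately, and the Lagrangian term $\lambda A^c$ perturbs the weights. My first attempt would be to treat $\pi_\text{perf}$ as the exact maximizer of the penalized surrogate within the KL ball. The feasibility of $\pi^*$ combined with $\lambda\ge0$ then makes the penalty term nonpositive in the comparison, and any residual is absorbed into the same Q-error order.
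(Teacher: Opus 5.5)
Your three steps reproduce the paper's proof of Theorem~\ref{theorem_2}. The Performance Difference Lemma reduces the gap to $\frac{2}{1-\gamma}\|Q^{\pi^*}_r-\hat Q_r\|_\infty$. That term is then bounded by the Lemma~\ref{lemma_1} estimate with $\mathcal{C}_{perf}$ and $\mathrm{VCdim}=\mathcal{O}(d_\theta L\log d_\theta)$, which the paper cites as a classical generalization bound. Your second step needs neither the KL constraint nor smoothness. Since $\hat V_r(s)=\mathbb{E}_{a\sim\pi^*}[\hat Q_r(s,a)]$ and $V^{\pi^*}_r(s)=\mathbb{E}_{a\sim\pi^*}[Q^{\pi^*}_r(s,a)]$, the bound $\|\hat V_r-V^{\pi^*}_r\|_\infty\le\|\hat Q_r-Q^{\pi^*}_r\|_\infty$ is immediate.

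The genuine gap is the step you flag, and the paper leaves it open too: in Theorem~\ref{theorem_1} the term $\hat Q-\hat V$ is simply dropped in the triangle inequality. With your $\hat V_r$, you must show that
\[
\mathbb{E}_{s\sim d^{\pi_{\text{perf}}}}\big[\mathbb{E}_{a\sim\pi_{\text{perf}}}\hat Q_r(s,a)-\mathbb{E}_{a\sim\pi^*}\hat Q_r(s,a)\big]
\]
is nonnegative, or at least bounded below by minus a quantity of the stated order. AWR's exponential tilt does not give this. It is a KL-regularized improvement over $\pi_\beta$ with respect to the penalized advantage, not a comparison with $\pi^*$.

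Your fallback does not close the gap either. If $\pi_{\text{perf}}$ exactly maximizes the penalized surrogate over the KL ball, comparing it with $\pi^*$ yields only
\[
\mathbb{E}_{s\sim\mathcal{D}}\big[\mathbb{E}_{\pi_{\text{perf}}}\hat Q_r-\mathbb{E}_{\pi^*}\hat Q_r\big]\ \ge\ \tfrac{\lambda}{\beta_r}\,\mathbb{E}_{s\sim\mathcal{D}}\big[\mathbb{E}_{\pi_{\text{perf}}}\hat Q_c-\mathbb{E}_{\pi^*}\hat Q_c\big].
\]
The right-hand side has no definite sign. It is negative whenever $\lambda>0$ and $\pi_{\text{perf}}$ is the more cautious policy under $\hat Q_c$. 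Feasibility $J^{\pi^*}_c\le\kappa$ is a trajectory-level fact about the true cost, so it does not control this per-state comparison.

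Even an idealized $J$-level Lagrangian argument gives only $J^{\pi^*}_r-J^{\pi_{\text{perf}}}_r\le\lambda(\kappa-J^{\pi_{\text{perf}}}_c)$ plus estimation error. Feasibility of $\pi^*$ disposes of $\pi^*$'s penalty, but $\pi_{\text{perf}}$'s penalty survives unless complementary slackness holds. Neither the dual ascent on $\lambda$ nor the hypotheses guarantee that.

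There is also a distribution mismatch. The surrogate and the KL constraint are averaged over $s\sim\mathcal{D}$, so the comparison holds only on average over dataset states. The PDL needs it under $d^{\pi_{\text{perf}}}$, and a density-ratio bound like Assumption~\ref{assumption_1} cannot transfer the sign of a signed average. If AWR's fixed temperature replaces a hard KL ball, there is a further residual of order $\varepsilon/\beta_r$ that does not shrink with $|\mathcal{D}|$. None of these residuals is absorbed into the $|\mathcal{D}|^{-1/2}$ rate.

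To make the proof go through, you must add a hypothesis the theorem does not state. One option is per-state dominance, $\mathbb{E}_{a\sim\pi_{\text{perf}}}\hat Q_r(s,a)\ge\mathbb{E}_{a\sim\pi^*}\hat Q_r(s,a)$ for all $s$; with it, the rest of your argument works as written. The other is an approximate saddle-point condition on $(\pi_{\text{perf}},\lambda)$, with its optimization error added to the bound.
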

\begin{proof}
    Similar to the proof for Theorem~\ref{theorem_1}, based on performance difference lemma, we can get the following relationship
    \begin{equation}
        J^{\pi^*}_r-J^{\pi_\text{perf}}\leq\frac{2}{1-\gamma}\|Q^{\pi^*}_r-\hat{Q}_r\|_\infty
    \end{equation}
    By classical generalization bounds for predictors in~\cite{bartlett2002rademacher}, we can know that 
    \begin{equation}
        \|Q^{\pi^*}_r-\hat{Q}_r\|_\infty\leq \mathcal{O}\bigg(\frac{r_m\sqrt{\mathcal{C}_{perf}}}{1-\gamma}\sqrt{\frac{d_\theta L\text{log}(d_\theta)+\textnormal{log}\frac{1}{\varrho}}{|\mathcal{D}|}}\bigg).
    \end{equation}
    Combining the last two inequalities attains the desirable result.
\end{proof}
Theorem~\ref{theorem_2} shows that performance suboptimality mainly depends on model dimensions, dataset size, and concentrability. 

In the sequel, we establish the sample complexity for LexiSafe-SC.

\noindent\textbf{Sample Complexity bound.}
Denote the number of samples $N=|\mathcal{D}|$ and suppose that both safety violation and performance suboptimality bounds are ensured to be less than or equal to a desirable accuracy ($\leq \epsilon$) for some sufficiently small $\epsilon>0$. 
\begin{theorem}\label{theorem_3}
    Let $T:=\frac{1}{1-\gamma}$ be the effective horizon. Given a desired accuracy $\epsilon$, with probability at least $1-\varrho$, to learn a safe optimal policy $\pi_\text{perf}$, the sample copmlexity incurred by LexiSafe-SC is
    $
        N=\tilde{\mathcal{O}}(\frac{T^4}{\epsilon^2}\cdot\text{max}\{c^2_m\mathcal{C}_{safe}d_{\theta}L, r^2_m\mathcal{C}_{perf}d_\theta L\})$.
\end{theorem}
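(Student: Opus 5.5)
The plan is to invert the two finite-sample bounds already established, Theorem~\ref{theorem_1} for the safety violation and Theorem~\ref{theorem_2} for the performance suboptimality, and then combine them with a union bound. First, substitute $T=\frac{1}{1-\gamma}$ into both results. The safety violation $J^{\pi_\text{safe}}_c-\kappa$ is then at most $G_1\, c_m T^2\sqrt{\mathcal{C}_{safe}}\sqrt{(d_\theta L\log d_\theta+\log(1/\varrho))/N}$. The suboptimality $J^{\pi^*}_r-J^{\pi_\text{perf}}_r$ is at most $G_2\, r_m T^2\sqrt{\mathcal{C}_{perf}}\sqrt{(d_\theta L\log d_\theta+\log(1/\varrho))/N}$. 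Here $G_1,G_2$ are the absolute constants hidden in the $\mathcal{O}(\cdot)$ notation.

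Next, require each right-hand side to be at most $\epsilon$ and solve for $N$. The safety requirement gives
\[
N\ge G_1^2 c_m^2\mathcal{C}_{safe}T^4\bigl(d_\theta L\log d_\theta+\log(1/\varrho)\bigr)/\epsilon^2,
\]
and the performance requirement gives the same expression with $r_m,\mathcal{C}_{perf},G_2$ in place of $c_m,\mathcal{C}_{safe},G_1$. Since the two phases reuse the same dataset $\mathcal{D}$, a single $N$ must satisfy both, so we take the maximum of the two lower bounds.

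To obtain the joint guarantee with probability $1-\varrho$, invoke each theorem at confidence level $\varrho/2$ and apply a union bound. This only changes $\log(1/\varrho)$ to $\log(2/\varrho)$. Finally, absorb $\log d_\theta$, $\log(2/\varrho)$ and the constants into the $\tilde{\mathcal{O}}$ notation, which by the paper's convention allows polylogarithmic factors. Because $d_\theta L\ge 1$, the additive term satisfies $d_\theta L\log d_\theta+\log(2/\varrho)\le d_\theta L\,(\log d_\theta+\log(2/\varrho))$, so it is dominated by $d_\theta L$ up to a log factor. This yields $N=\tilde{\mathcal{O}}\bigl(\frac{T^4}{\epsilon^2}\max\{c_m^2\mathcal{C}_{safe}d_\theta L,\ r_m^2\mathcal{C}_{perf}d_\theta L\}\bigr)$.

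The main obstacle is ensuring that the two bounds legitimately hold at once for the final policy $\pi_\text{perf}$. Theorem~\ref{theorem_1} is stated for $\pi_\text{safe}$, whereas the claim concerns the safety of $\pi_\text{perf}$. I would close this gap with the paper's standing assumption that Phase 2 fine-tunes within the feasible set $\Pi$ under the same KL proximity constraint. The Lagrangian term $\lambda A^c$ in Eq.~\ref{lambda_update} is what enforces this. Under that assumption, Theorem~\ref{theorem_1} applies to $\pi_\text{perf}$ with $\mathcal{C}_{safe}$ interpreted as the concentrability coefficient of $\pi_\text{perf}$. If that reading is not accepted, the fallback is to rerun the argument of Theorem~\ref{theorem_1} directly on $\pi_\text{perf}$. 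That argument (performance difference lemma plus Lemma~\ref{lemma_1} for $\hat Q_c$) produces a bound of the same order, so the sample complexity is unchanged.
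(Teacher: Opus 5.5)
Your proposal is correct and follows the paper's proof: require the bounds of Theorems~\ref{theorem_1} and~\ref{theorem_2} to be at most $\epsilon$, solve each for $N$ (the factor $(1-\gamma)^{-2}=T^2$ squares to $T^4$), take the larger requirement, and absorb the $\log d_\theta$ and $\log(1/\varrho)$ terms into $\tilde{\mathcal{O}}$. Two of your additions go beyond the paper. The paper does not take a union bound, and it does not address $\pi_{\text{safe}}$ versus $\pi_{\text{perf}}$; it simply imposes $J^{\pi_{\text{safe}}}_c\le\kappa+\epsilon$ and relies on its standing assumption that Phase 2 preserves safety, which is your primary resolution. Your fallback of rerunning Theorem~\ref{theorem_1} on $\pi_{\text{perf}}$ is not automatic, however. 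A careful version of that argument needs the policy to be no worse than the comparator under $\hat Q_c$, which is a property of the cost-minimizing phase, not of $\pi_{\text{perf}}$.
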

\begin{proof}
    Based on the conclusions from Theorem~\ref{theorem_1} and Theorem~\ref{theorem_2}, we let
    \begin{equation}
        J^{\pi_\text{safe}}_c\leq\kappa+\epsilon\;,J^{\pi^*}-J^{\pi_\text{perf}}\leq \epsilon. 
    \end{equation}
    Hence, it is immediately to obtain the relationship between $N$ and $\epsilon$. By ignoring the poly-logarithmic factors with respect to $d_{\theta}, \varrho$, we complete the proof.
\end{proof}
The sample complexity of LexiSafe depends on model size, distribution shift, and the horizon. Larger actor networks require more data, while high concentrability and long horizons increase sample demands due to distribution mismatch and credit assignment challenges. By far, we have analyzed the sampling complexity for the single-cost scenario. In what follows, we will extend the LexiSafe-SC to the scenario with multiple costs and present the associated sample complexity.

\subsection{Generalization to Multiple Costs}
Eq.~\ref{eq_2} presents the offline safe RL formulation without any hierarchical structure within the cost. However, many real-world applications feature not only a hierarchy between safety and reward, but also additional layers of hierarchy within the safety objective itself. For example, in autonomous driving~\cite{yang2024genesis}, one must balance competing safety considerations such as avoiding collisions versus complying with traffic regulations, which inevitably requires prioritizing different safety rules. This naturally leads to a multi-phase cost-minimization problem in safety learning. Consequently, LexiSafe-SC must be extended to handle the multi-cost setting. To this end, we first adapt Eq.~\ref{eq_2} to the following:
\begin{equation}
    \text{max}_\pi V^\pi_r(s), \; s.t., V^\pi_{c_j}(s)\leq \kappa_j,\; D_{KL}(\pi||\pi_\beta)\leq \varepsilon,
\end{equation}
where $j\in\{1,...,K-1\}$. In this context, we assume that there are totally $K$ phases of learning including the first $K-1$ phases of safety learning. Given this in hand, the lexicographic order for the multi-cost scenario can be redefined as:
\begin{definition}\label{definition_3}
    Given a pre-collected dataset $\mathcal{D}$ from some unknown behavior policy $\pi_\beta$, LexiSafe-MC enforeces a strict hierarchical priorities between objectives:
    \begin{itemize}
        \item Primary objectives (safety):
        \begin{equation}\begin{split}&\pi^j_\text{safe}\in\text{argmin}_{\pi\in\Pi_{j-1}}\mathbb{E}_{\tau\sim\pi}[C_j(\tau)] \\&\text{s.t.,}\mathbb{E}_{s\sim\mathcal{D}}[D_{KL}(\pi||\pi_\beta)]\leq\varepsilon, \forall j=1,...,K-1,\end{split}\end{equation}
        where $\Pi_0=\mathcal{A}, \Pi_k=\{\pi\in\Pi_{j-1}|\mathbb{E}_{\tau\sim\pi}[C_j(\tau)]\leq\kappa_j\}$.
        \item Secondary objective (performance): 
        \begin{equation}\begin{split}\pi^*=\text{argmax}_{\pi\in\Pi_{K-1}}&\mathbb{E}_{\tau\sim\pi}[R(\tau)] \\& \text{s.t.,}\mathbb{E}_{s\sim\mathcal{D}}[D_{KL}(\pi||\pi_\beta)]\leq\varepsilon.\end{split}\end{equation}
    \end{itemize}
\end{definition}
To solve the above optimization problem, we still adopt IQL to learn the optimal policy by recursively training $\theta$ to comply with the safety priorities. 
Similarly, multiple value nets and Q-nets are updated by minimizing the losses in Eq.~\ref{eq_5} and~\ref{eq_6} respectively. To extract the policy in each phase of cost minimization, we have
\begin{equation}
    \pi^j_{safe} = \text{argmax}_{\pi_j}\mathbb{E}_{(s,a)\sim\mathcal{D}}[\text{exp}(\beta^j_cA^c_j(s,a)\text{log}\pi_j(a|s)].
\end{equation}
where $A^c_j(s,a)$ is the cost advantage corresponding to phase $j$, $\beta_c^j\in[0,\infty)$ is a hyperparameter in AWR called inverse temperature corresponding to cost. Similarly, the first-order optimization algorithm is used to improve the policy network by minimizing the corresponding cost policy loss in phase $j$:
\begin{equation}\label{eq_15}
    \theta \leftarrow \theta-\nu_c\nabla_\theta \mathcal{L}^c_{\pi,j}(\theta),
\end{equation}
where $\nu_c$ is the learning rate and $\mathcal{L}^c_{\pi,j}(\theta)$ is the policy loss associated with cost $j$, which is expressed as
\begin{equation}
    \mathcal{L}^c_{\pi,j}(\theta)=\mathbb{E}_{(s,a)\sim\mathcal{D}}[-\text{exp}(-\beta^j_cA^c_j(s,a))\cdot\text{log}\pi_\theta(a|s)].
\end{equation}
Following the multi-phase learning for safety, the reward maximization aims to improve the policy as follows:
\begin{equation}\label{eq_17}
\begin{split}
   \pi_\text{perf} &= \text{argmax}_{\pi}\mathbb{E}_{(s,a)\sim\mathcal{D}}[\text{exp}(\beta_r(A^r(s,a)\\&-\sum_{j=1}^{K-1}\frac{\lambda_j}{\beta_r}A^c_j(s,a)))\text{log}\pi_j(a|s)],
\end{split}
\end{equation}
where $\lambda_j$ is a regularization coefficient for cost due to the optimal policy search in the $j$-th phase.
Likewise, if using first-order method to update the parameter, we can have the following:
\begin{equation}\label{eq_18}
    \theta\leftarrow\theta-\nu_r\nabla_{\theta}\mathcal{L}^r_\pi(\theta),
\end{equation}
where $\mathcal{L}^r_\pi(\theta)$ is the policy loss in the following form:
\begin{equation}
    \begin{split}
        \mathcal{L}^r_\pi(\theta)=\mathbb{E}_{(s,a)\sim\mathcal{D}}&[\text{exp}(\beta_r(A^r(s,a)-\\&\sum_{j=1}^{K-1}\tfrac{\lambda_j}{\beta_r}A^c(s,a)))\text{log}\pi_{\theta}(a|s)].
    \end{split}
\end{equation}
Since in each phase of cost minimization, the update is similar to that in the single-cost scenario, the associated cost constraint violation bound is analogous to the conclusion in Theorem~\ref{theorem_1}. Therefore, we have the following corollary for the sample complexity for the multi-cost scenario.
\begin{corollary}\label{corollary_1}
    Let $T:=\frac{1}{1-\gamma}$ be the effective horizon. Given a desired accuracy $\epsilon$, with probability at least $1-\varrho$, to learn a safe optimal policy $\pi_\text{perf}$, the sample copmlexity incurred by LexiSafe-MC is
$
    N=\tilde{\mathcal{O}}(\frac{T^4}{\epsilon^2}\cdot\text{max}\{(Kc^2_m\mathcal{C}_{safe}d_{\theta}L, r^2_m\mathcal{C}_{perf}d_\theta L\})$.
\end{corollary}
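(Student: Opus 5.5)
The plan is to mirror the proof of Theorem~\ref{theorem_3}, handling the $K-1$ cost phases one at a time and then the reward phase.

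\textbf{Per-phase safety bounds.} For each cost phase $j\in\{1,\dots,K-1\}$, the policy $\pi^j_{safe}$ is extracted by the same IQL/AWR update (Eq.~\ref{eq_15}) on the same actor of depth $L$ with $d_\theta$ parameters. So Lemma~\ref{lemma_1} and the argument of Theorem~\ref{theorem_1} apply verbatim with $c$ replaced by $c_j$. We assume each $Q^\pi_{c_j}\in\mathcal{F}_{c_j}$ and that $c_j\le c_m$ uniformly, and we take $\mathcal{C}_{safe}$ to be the largest concentrability constant over the phases. Then, with probability at least $1-\varrho_j$,
\[
J^{\pi^j_{safe}}_{c_j}\le\kappa_j+\mathcal{O}\Big(\tfrac{c_m\sqrt{\mathcal{C}_{safe}}\sqrt{d_\theta L\log d_\theta+\log(1/\varrho_j)}}{(1-\gamma)^2\sqrt{N}}\Big).
\]
Because each phase searches inside $\Pi_{j-1}$, and the Lagrangian terms $\lambda_jA^c_j$ in Eq.~\ref{eq_17} keep the later phases in the region already enforced, the earlier constraints are inherited. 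The reward phase is handled exactly as in Theorem~\ref{theorem_2}, giving a suboptimality bound with $r_m\sqrt{\mathcal{C}_{perf}}$.

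\textbf{Union bound over phases.} We set $\varrho_j=\varrho/K$ for each of the $K-1$ cost phases and for the reward phase, and take a union bound. This only adds $\log K$ inside the square root, which is absorbed into $\tilde{\mathcal{O}}$.

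\textbf{Source of the factor $K$.} We want a uniform guarantee that the final policy violates all constraints by at most $\epsilon$ in aggregate. Under the lexicographic reduction (solving a CMDP as a sequence of LMDPs), the per-phase errors accumulate along the hierarchy: the slack allowed in phase $j$ enlarges the feasible set $\Pi_j$ seen by phase $j+1$. We therefore require the sum of the safety errors to be at most $\epsilon$. The squared errors are additive, and it is the sum of their squares that must be at most $\epsilon^2$. Each squared error is $c_m^2\mathcal{C}_{safe}d_\theta L/((1-\gamma)^4 N)$ up to logarithmic factors, so this gives
\[
N\gtrsim \frac{K c_m^2\mathcal{C}_{safe}d_\theta L\,T^4}{\epsilon^2}.
\]
An alternative route gives the same count. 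Each phase $j$ uses its own critic class $\mathcal{F}_{c_j}$, and uniform convergence over the product of the $K-1$ cost classes is governed by their total VC dimension, which is at most $(K-1)$ times the per-phase dimension. This again yields the factor $K$. The reward requirement $N\gtrsim r_m^2\mathcal{C}_{perf}d_\theta L\,T^4/\epsilon^2$ comes from Theorem~\ref{theorem_2} as in Theorem~\ref{theorem_3}. Taking the maximum of the two requirements and dropping poly-logarithmic factors in $d_\theta$, $K$ and $1/\varrho$ gives the stated $N$.

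\textbf{Main obstacle.} The delicate step is justifying the linear (not $\log K$) dependence on $K$, that is, showing how errors propagate across the nested feasible sets $\Pi_j$. I would first try the aggregate squared-error/product-class argument above, since it directly yields the factor $K$ without a finer analysis of the nested sets. If a sharper per-constraint guarantee were required, a tighter argument would be needed to show the nested slacks do not compound multiplicatively.
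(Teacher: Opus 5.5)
Your skeleton matches the paper's very terse proof: apply the Theorem~\ref{theorem_1} bound to each of the $K-1$ cost phases, keep the Theorem~\ref{theorem_2} requirement for the reward phase, accumulate over the safety phases, and take the maximum. In the paper, the linear $K$ comes only from adding up the $K-1$ per-phase sample requirements, and your sum-of-squares computation reproduces exactly that count. The problem is the justification you give for it.

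You state the target as ``the sum of the safety errors is at most $\epsilon$'', but you then impose $\sum_j e_j^2\le\epsilon^2$ instead. These are different conditions. Write $e_j\approx a/\sqrt{N}$ with $a=c_m\sqrt{\mathcal{C}_{safe}d_\theta L}\,T^2$. The condition you announced, $\sum_j e_j\le\epsilon$, forces $N\gtrsim (K-1)^2a^2/\epsilon^2$. By Cauchy--Schwarz, $\sum_j e_j^2\le\epsilon^2$ only controls $\sum_j e_j$ up to $\sqrt{K-1}\,\epsilon$. The claim ``squared errors are additive'' has no basis here. The $e_j$ are worst-case high-probability violation bounds computed from the same dataset and the same actor, not independent mean-zero errors whose variances add. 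So under the aggregate criterion you chose, this step does not deliver the stated $N$. Your product-class VC route does not rescue that criterion either: per-coordinate accuracy $\epsilon/(K-1)$ would cost even more.

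The fix is to notice that the linear $K$ does not need to be derived at all, because the corollary is an upper bound on $N$. Use the per-constraint notion of accuracy inherited from Theorem~\ref{theorem_3}: $J_{c_j}\le\kappa_j+\epsilon$ for every $j$, and reward suboptimality at most $\epsilon$. Your per-phase bounds with $\varrho_j=\varrho/K$ and a union bound then already give $N=\tilde{\mathcal{O}}\big(\tfrac{T^4}{\epsilon^2}\max\{c_m^2\mathcal{C}_{safe}d_\theta L,\ r_m^2\mathcal{C}_{perf}d_\theta L\}\big)$. The only extra cost is a $\log K$ absorbed into $\tilde{\mathcal{O}}$, and this bound is dominated by the stated one since $K\ge 1$.

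Under this criterion the product-class argument is valid but wasteful: it pays $(K-1)$ times the VC dimension where a union bound pays $\log K$. Also, at the level of rigor of Theorem~\ref{theorem_1}, each phase is compared against its own exact comparator, so there is no slack propagation across the nested $\Pi_j$ to control. Drop the aggregate-error paragraph and the ``main obstacle''. The factor $K$ in the corollary is slack relative to what your per-phase bounds give, not a delicate point.
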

\begin{proof}
    We follow the same proof techniques adopted for Theorem~\ref{theorem_3}. Since there are $K-1$ phases of cost minimization to correspond to safety learning, we can simply accumulate them. The sample complexity for the performance maximization remains the same. This completes the proof.
\end{proof}
Corollary~\ref{corollary_1} suggests that increasing the number of phases in the cost minimization process can lead to higher sample complexity, and the bound naturally reduces to that of the single-cost setting when $K=1$. Algorithm~\ref{alg:lexisafe_multi_single_theta_clean} presents the overall framework for LexiSafe-MC. To obtain more accurate cost estimates for each phase in the offline RL setting, we apply exponential smoothing (Lines 9–10) between the empirical return and the predicted value computed from a mini-batch $\mathcal{B}$. $\tilde J^j_c$ is the estimated cost return in the phase $j$. Lines 11–14 illustrate the cost minimization procedure for the $K-1$ safety-related phases, ensuring adherence to the prescribed safety priorities. In contrast, the method in~\cite{skalse2022lexicographic} requires carefully chosen step sizes that are non-summable yet square-summable, complicating their algorithm and slowing convergence. LexiSafe-MC, however, retains the simplicity of using constant step sizes for all updates.
\begin{algorithm}[t]
\caption{LexiSafe-MC, multiple costs single reward}
\label{alg:lexisafe_multi_single_theta_clean}
\begin{algorithmic}[1]
\STATE \textbf{Initialization}: $\rho,\phi$ for reward $Q$ and $V$ nets;\; $\{\,\psi_j,\eta_j\,\}_{j=1}^{K-1}$ for cost $Q$ and $V$ nets;\; policy parameters $\theta$;\; Lagrange multipliers $\{\,\lambda_j\,\}_{j=1}^{K-1} \ge 0$;\; cost thresholds $\{\,\kappa_j\,\}_{j=1}^{K-1}$;\; learning rates $\nu_c,\nu_r,\nu_\lambda$;\; discount $\gamma$;\; moving averge constant  $\alpha\in[0,1]$.
\FOR{each gradient step}
  \STATE \texttt{Sample minibatch} $\mathcal{B}$
  \STATE \texttt{Update reward critics:}
  \STATE \hspace{0.75em} $\rho \leftarrow \rho - \nu_r\nabla_\rho\,\mathcal{L}^{r}_Q(\rho)$\\
    \hspace{0.75em} $\phi \leftarrow \phi - \nu_r\nabla_\phi\,\mathcal{L}^{r}_V(\phi)$
  \STATE \texttt{Update cost critics for each $j$:}
  \STATE \hspace{0.75em} $\psi_j \leftarrow \psi_j - \nu_c\nabla_{\psi_j}\,\mathcal{L}^{c}_{Q_j}(\psi_j)$
         \\
         \hspace{0.75em} $\eta_j \leftarrow \eta_j - \nu_c\nabla_{\eta_j}\,\mathcal{L}^{c}_{V_j}(\eta_j)$
  \STATE \texttt{Estimate costs and smoothing:}
  \STATE \hspace{0.75em} $\widehat C_j \;\leftarrow\; \frac{1}{|\mathcal B|}\sum_{(s,\cdot)\in\mathcal B} V^c_{\eta_j}(s)$
  \STATE \hspace{0.75em} $\tilde C_j \leftarrow (1-\alpha)\,\tilde J^j_c + \alpha\,\widehat C_j$

  \STATE \texttt{Phase 1,..,K-1: Cost minimization, for each $j$:}

  \STATE \hspace{0.75em} $\lambda_j\leftarrow\text{max}\{0,\lambda_j+\nu_\lambda(\tilde{C_j}-\kappa_j)\}$

  \STATE \hspace{0.75em} $A^c_j(s,a)\leftarrow Q^c_{\psi_j}(s,a)-V^c_{\eta_j}(s)$
  \STATE \hspace{0.75em} $\theta\leftarrow\theta-\nu_c\nabla_\theta\mathcal{L}^c_{\pi,j}(\theta)$
  \STATE \texttt{Phase K: Reward maximization}
  \STATE \hspace{0.75em} $A^r(s,a)\leftarrow Q_\rho(s,a)-V_\phi(s)$
  \STATE \hspace{0.75em} $\theta \leftarrow\theta-\nu_r\nabla_{\theta_1}\mathcal{L}^r_\pi(\theta)$ 
\ENDFOR
\STATE \textbf{Output}: $\pi_\theta$
\end{algorithmic}
\end{algorithm}

\section{Numerical Results}
We evaluate both variants of LexiSafe on simulators to assess: (1) performance against state-of-the-art offline safe baselines; (2) the benefits of imposing lexicographic structure versus a flat (weighted) cost-reward objective. In what follows, we detail the dataset, model architecture and hyperparameters, the hardware and experiment setup, and the results. 

\noindent\textbf{Dataset.}
Experiments are conducted on dataset selected for their relevance to safe offline reinforcement learning benchmarking, specifically the DSRL benchmark~\cite{liu2023datasets}. This benchmark includes trajectories drawn from tasks in the MetaDrive~\cite{li2021metadrive}, Bullet Safety Gym~\cite{gronauer2022bullet}, and Safety Gymnasium~\cite{ji2023safety} environments. In our work, we select a subset of these tasks to demonstrate the effectiveness of our method. 

\noindent\textbf{Model Architecture and Hyperparameters.}
We follow the standard setup as widely used in RL domain to parameterize the actor and critic networks. Specifically, they are all multi-layer perceptron (MLP) models. The hyperparameter setting is shown in Table~\ref{table:hyper_params}. Note that in this context we summarize key hyperparameters in RL setting. Though a hyperparameter optimization method can likely be beneficial for the performance improvement, we tune them manually in this work.

\begin{table}[t]
\caption{Hyperparameters}
\begin{center}
\begin{threeparttable}
\begin{tabular}{c c}
    \toprule
    \textbf{Hyperparameter} & \textbf{Value (Experiment range)}\\ \midrule
     Batch size &  2048 (32-2048)\\
     Discount factor & 0.995 (0.9-0.995)\\
     $\beta_c$ and $\beta_r$ & 1 (1-5)\\
     $\xi_{reward}$ and $\xi_{cost}$ & 0.7 (0.6-0.7)\\
     Actor learning rate & 3e-4 (3e-4-1e-3)\\
     Q and value learning rate & 3e-5 (1e-5-3e-5)\\
     $\lambda_c$ learning rate & 1e-4 (1e-4-1e-3)\\
     Hidden layer dimension & 128 (4-256)\\
     IQL Q network soft update $\tau$ & 0.005\\
     Training random seeds & 7, 17, 27, 77, 777\\
     Testing random seeds & 14, 42, 84, 98, 49\\
      \bottomrule
\end{tabular}
\end{threeparttable}
\end{center}
\label{table:hyper_params}
\end{table}

\noindent\textbf{Comparative study with LexiSafe-SC.} We first compare LexiSafe-SC with several recent safe RL baselines (BC-Safe, COptiDICE, CPQ, FISOR, LSPC-O) across tasks from Safety Gymnasium~\cite{ji2023safety} and Bullet Safety Gym~\cite{gronauer2022bullet}. Experiments follow the DSRL benchmark protocol~\cite{liu2023datasets}: for each environment, we train LexiSafe-SC with five random seeds, and for each trained agent we perform ten evaluation runs, each with five additional seeds, yielding 250 trajectories in total. We compute the raw mean and variance over these trajectories. Performance is reported using normalized reward $R=(R_\pi-R_{min})/(R_{max}-R_{min})$ and normalized cost $C=C_\pi/\kappa$, where $R_\pi$ is the undiscounted total per-episode reward, $R_{max}, R_{min}$ are task-specific constants, and $\kappa>0$ is the target cost threshold. A policy is considered safe when $C<1$; for safety eveluation we therefore only consider whether $C$ falls below 1. Its exact magnitude below 1 is not relevant to feasibility.

In Table~\ref{tab:performance_large_table}, safe agents ($C<1$) are shown in boldface. Entries highlighed in blue mark the safe agent(s) achieving the highest reward (we also mark agents within 0.05 of the top safe reward in blue). As shown in Table~\ref{tab:performance_large_table}, LexiSafe achieves state-of-the-art performance while adhering to safety constraints. BC-Safe~\cite{liu2023datasets} relies on behavior cloning from filtered safe data, but its effectiveness is limited by the quantity of safe samples and it must be retrained for different thresholds. COptiDICE~\cite{lee2022coptidice} performs the worst, suffering from inaccurate distribution correction estimates. CPQ~\cite{xu2022constraints} penalizes OOD actions using a conditional variational autoencoder, but this can distort value estimates and hinder generalization. FISOR~\cite{zheng2024safe} ensures safety via hard constraints but often leads to overly conservative policies. LSPC-O~\cite{koirala2024latent} is competitive but uses separate models for safety and performance optimization. In contrast, LexiSafe employs a lexicographic optimization framework that sequentially prioritizes safety before performance. This decoupled yet integrated training structure enables stable learning, constraint satisfaction, and preserves adaptability to performance improvement. Unlike BC-Safe, LexiSafe is robust to suboptimal demonstrations, and compared to FISOR, CPQ, and COptiDICE, it avoids the brittleness of joint constrained optimization. Moreover, LexiSafe improves upon LSPC-O by achieving comparable safety–performance trade-offs with a simpler architecture and and provable bounds, making it well-suited for safety-critical offline RL.

\begin{table*}[h]
    \centering
    \caption{Comparison of our methods with baselines across benchmark tasks. \textbf{Bold} indicates safety (cost $<$ 1), and \textbbf{blue} denotes both safety and high performance. Polices that are safe and is within 0.05 range of the highest reward are also marked blue.} \label{tab:performance_large_table}
    \resizebox{1.0\textwidth}{!}{
    \rowcolors{2}{green!5}{purple!5}
    \begin{tabular}{|c|cc|cc|cc|cc|cc|cc|}
        \hline
        \textbf{Method} & \multicolumn{2}{c}{\textbf{BC-Safe}} & \multicolumn{2}{c}{\textbf{COptiDICE}}  & \multicolumn{2}{c}{\textbf{CPQ}} & \multicolumn{2}{c}{\textbf{FISOR}} & \multicolumn{2}{c}{\textbf{LSPC-O}} & \multicolumn{2}{c|}{\textbf{LexiSafe-SC}}\\
        \hline
        Task & reward $\uparrow$ & cost $\downarrow$ & reward $\uparrow$ & cost $\downarrow$ & reward $\uparrow$ & cost $\downarrow$ & reward $\uparrow$ & cost $\downarrow$ & reward $\uparrow$ & cost $\downarrow$ & reward $\uparrow$ & cost $\downarrow$ \\
        \hline
        \multicolumn{1}{c}{Safety Gym:} \\
        \hline
        SwimmerVel & 0.51 & 1.07 & 0.63 & 7.58 & 0.13 & 2.66 &  \textbf{-0.04} & \textbf{0.00} & \textbf{0.44} & \textbf{0.14} & \textbbf{0.51} & \textbbf{0.92} \\
        HopperVel & \textbf{0.36} & \textbf{0.67} & 0.13 & 1.51 & 0.14 & 2.11 & \textbf{0.17} & \textbf{0.32} & \textbbf{0.69} & \textbbf{0.00} & \textbbf{0.70} & \textbbf{0.50} \\
        HalfCheetahVel & \textbf{0.88} & \textbf{0.54} & \textbf{0.65} & \textbf{0.00} & \textbf{0.29} & \textbf{0.74} & \textbf{0.89} & \textbf{0.00} & \textbbf{0.97} & \textbbf{0.10} & \textbbf{0.97} & \textbbf{0.69}\\
        Walker2dVel & \textbbf{0.79} & \textbbf{0.04} & \textbf{0.12} & \textbf{0.74} & \textbf{0.04} & \textbf{0.21}  & \textbf{0.38} & \textbf{0.36} & \textbbf{0.76} & \textbbf{0.02} & \textbbf{0.78} & \textbbf{0.40} \\
        AntVel & \textbbf{0.98} & \textbbf{0.29} & 1.0 & 3.28 & \textbf{-1.01} & \textbf{0.0} & \textbf{0.89} & \textbf{0.00} & \textbbf{0.98} & \textbbf{0.45} & \textbbf{0.98} & \textbbf{0.73} \\
        \hline
        \textbf{Average} & \textbf{0.70} & \textbf{0.14} & 0.51 & 2.62 & -0.08 & 1.14 &  \textbf{0.46} & \textbf{0.14} &  \textbbf{0.77} & \textbbf{0.15} & \textbbf{0.79} & \textbbf{0.65}\\

        \hline
        \multicolumn{1}{c}{Bullet Safety Gym:} \\
        \hline
        BallRun      & 0.27 & 1.46 & 0.59 & 3.52 & 0.22 & 1.27 & \textbbf{0.18} & \textbbf{0.00} & \textbf{0.14} & \textbf{0.00} & \textbbf{0.22} & \textbbf{0.18}\\
        CarRun       & \textbbf{0.94} & \textbbf{0.22} & \textbf{0.87} & \textbf{0.00} & 0.95 & 1.79 & \textbf{0.73} & \textbf{0.04} & \textbbf{0.97} & \textbbf{0.13} & \textbbf{0.98} & \textbbf{0.85}\\
        AntRun       & 0.65 & 1.09 & \textbf{0.61} & \textbf{0.94} &  \textbf{0.03} & \textbf{0.02} & \textbf{0.45} & \textbf{0.00} & \textbf{0.44} & \textbf{0.45} & \textbbf{0.65} & \textbbf{0.60} \\ 
        BallCircle   & \textbf{0.52} & \textbf{0.65} & 0.70 & 2.61 & \textbf{0.64} & \textbf{0.76} & \textbf{0.34} & \textbf{0.00} & \textbf{0.47} & \textbf{0.01} & \textbbf{0.71} & \textbbf{0.61}\\
        CarCircle    & \textbf{0.5}  & \textbf{0.84} & 0.49 & 3.14 & \textbbf{0.71} & \textbbf{0.33} & \textbf{0.40} & \textbf{0.03} & \textbbf{0.72} & \textbbf{0.04} & \textbbf{0.71} & \textbbf{0.58}\\
        DroneCircle  & \textbbf{0.56} & \textbbf{0.57} & 0.26 & 1.02 & -0.22 & 1.28 & \textbf{0.48} & \textbf{0.00} & \textbbf{0.58} & \textbbf{0.60} & \textbf{0.51} & \textbf{0.18}\\
        AntCircle   & \textbf{0.40}  & \textbf{0.96} & 0.17 & 5.04 & \textbf{0.00}  & \textbf{0.00} & \textbf{0.20} & \textbf{0.00} & \textbf{0.45} & \textbf{0.40} & \textbbf{0.51} & \textbbf{0.51}\\
        \hline
        \textbf{Average} & \textbf{0.55} & \textbf{0.94} & 0.53 & 2.32 & \textbf{0.33} & \textbf{0.78} &  \textbf{0.40} & \textbf{0.01} &  \textbf{0.54} & \textbf{0.23} & \textbbf{0.61} & \textbbf{0.50}\\
        \hline
    \end{tabular}
}
\end{table*}

\noindent\textbf{Ablation study with LexiSafe-MC.}  
The ablation study evaluates two aspects of LexiSafe-MC: (a) its ability to enforce multiple, hierarchically ordered safety constraints, and (b) its advantage compared to a flat, weighted objective implemented with vanilla IQL. We conduct experiments in the MetaDrive easydense environment, where individual costs are provided by the DSRL dataset and realistic driving-style safety costs can be naturally arranged into hierarchies, aligning with the intended application of LexiSafe-MC. We focus on two cost types: crash and speed, and evaluate two hierarchical orderings:
(1) Crash $\rightarrow$ Speed $\rightarrow$ Reward, prioritizing crash safety, and (2) Speed $\rightarrow$ Crash $\rightarrow$ Reward, prioritizing speed regulation.

\begin{figure}[b!]
    \centering
    \includegraphics[width=0.9\linewidth]{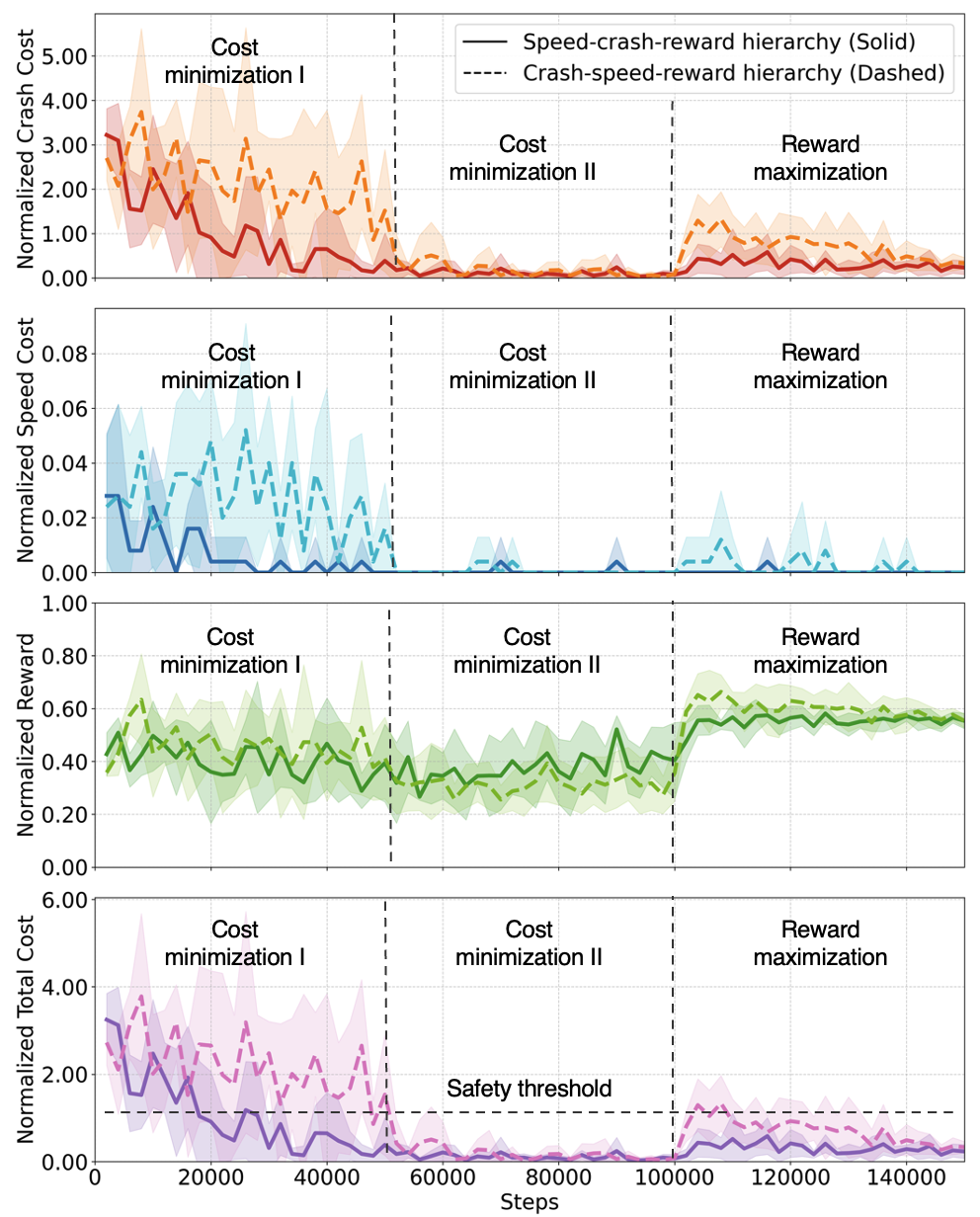}
    \caption{
    Ablation study showing LexiSafe's adherence to sequential lexicographical optimization. For both hierarchy orders, LexiSafe proceeds through the intended phases: first minimizing the primary cost, then the secondary, and finally improving reward while maintaining satisfied constraints.}
    \label{fig:lexi_multi_cost}
\end{figure}

LexiSafe-MC follows the multi-phase training procedure in Algorithm~\ref{alg:lexisafe_multi_single_theta_clean}, sequentially optimizing each objective in order of priority. For comparison, we construct a weighted IQL baseline that replaces the reward with a weighted sum, i.e., $\tilde R = R - \sum_j w_j C_j$, where each $w_j$ encodes the relative importance of cost $j = \{\text{speed}, \text{crash}\}$. We fix $w_{\text{speed}} = 1$ and sweep $w_{\text{crash}} \in \{10, 100, 1000, 5000\}$ to evaluate sensitivity to weight tuning. All experiments use the same offline datasets and seeds, and performance is reported as mean $\pm$ std across seeds.

We first verify whether LexiSafe-MC enforces the specified safety hierarchy under both orderings. Figure~\ref{fig:lexi_multi_cost} shows the mean (dashed and solid lines) and variance (shaded area) of normalized costs and reward across multiple seeds. Regardless of whether crash or velocity is prioritized, LexiSafe-MC consistently satisfies the highest-priority cost before optimizing subsequent objectives. For clarity, we describe the crash-speed-reward hierarchy (dashed line) as a representative example. In Phase 1 (Cost minimization I), the primary crash cost is reduced while the secondary speed cost remains relatively high. Once the crash constraint is met, Phase 2 (Cost minimization II) begins, during which the speed cost is minimized while the crash cost remains below its threshold. Finally, in Phase 3 (Reward maximization), LexiSafe-MC increases reward while maintaining both crash and speed constraints. A similar progression is observed under the speed-crash-reward hierarchy (solid line), where speed regulation dominates early training followed by crash safety enforcement. This sequential pattern confirms that lexicographic policy extraction and phased updates reliably follow the user-specified priority ordering.

We next assess whether simpler weighting strategies can provide comparable guarantees. To this end, we evaluate weighted IQL across a range of crash-weight values. As shown in Figure~\ref{fig:lexi_vs_flat}, weighted IQL fails to produce satisfactory policies across all tested settings. The brackets in the legend of Figure~\ref{fig:lexi_vs_flat} represents ($w_{crash}, w_{speed}$). When $w_{\text{crash}}$ is small (e.g., 1), the policy frequently violates the crash constraint to maximize reward. Increasing $w_{\text{crash}}$ does not reliably reduce violations and still fails to achieve safety levels comparable to LexiSafe-MC throughout the sweep. In contrast, LexiSafe-MC achieves consistent constraint satisfaction without extreme weighting while preserving high reward. This also highlights a practical tuning advantage: cost satisfaction in LexiSafe-MC follows directly from the multi-phase training procedure, whereas weighted IQL requires sensitive per-task weight sweeps that depend heavily on data composition and OOD penalties.

\begin{figure}[t!]
    \centering
    \includegraphics[width=0.9\linewidth]{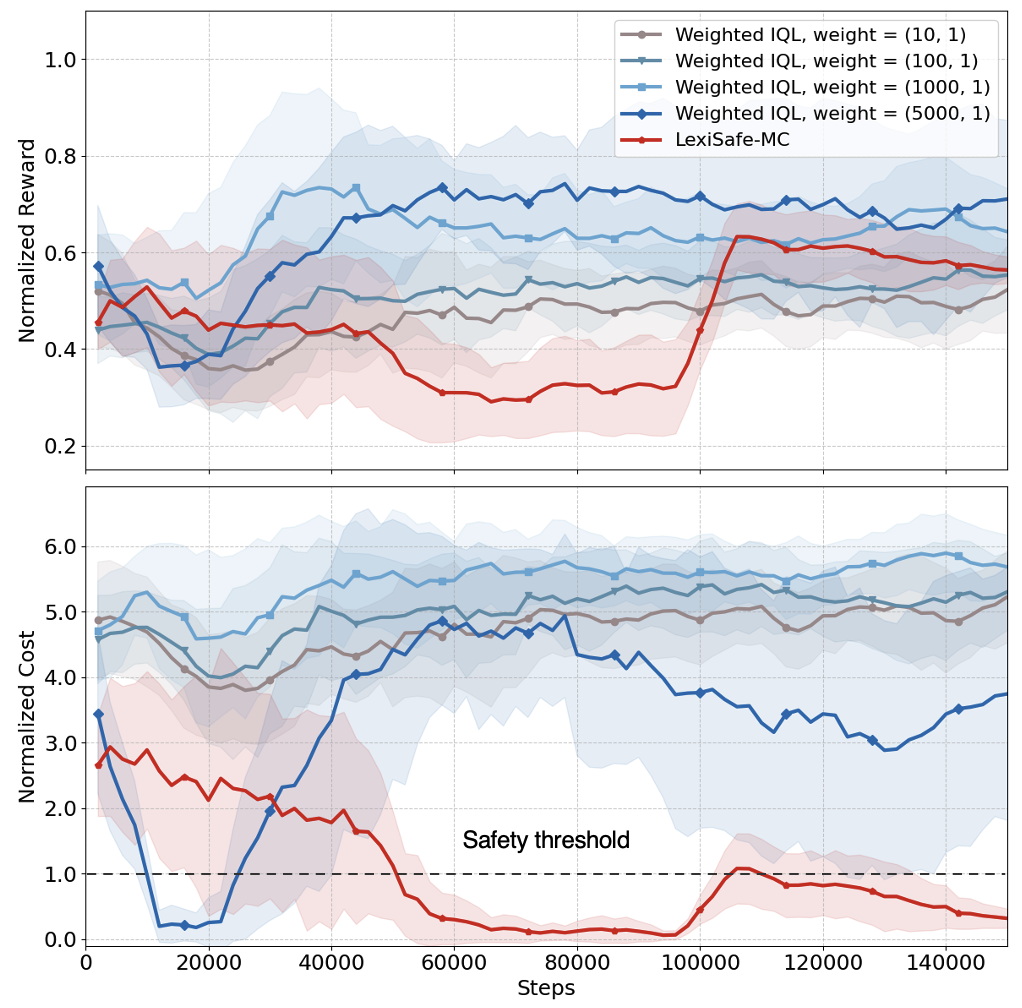}
    \caption{Comparison of LexiSafe-MC and weighted IQL across different crash-weight values with MetaDrive. LexiSafe-MC satisfies safety constraints while maintaining high reward by explicitly enforcing the user-specified priority order through sequential lexicographic optimization. In contrast, weighted IQL struggles to satisfy constraints reliably using the traditional weighted-sum strategy, highlighting both the practical tuning challenges and the limitations of flat weighting approaches. The brackets in the legend represents ($w_{crash}$,$w_{speed}$)}.
    
    \label{fig:lexi_vs_flat}
\end{figure}

\noindent\textbf{Limitations.} Our analysis assumes access to a well-behaved offline dataset with sufficient coverage of safe behaviors, and relies on concentrability coefficients that may be difficult to estimate or validate in practice. In addition, the VC-dimension–based bounds, while providing clear theoretical insight, may be loose for deep nonlinear networks and thus conservative relative to empirical performance. These assumptions highlight common limitations in offline RL theory and point to opportunities for developing tighter and data-dependent characterizations of safety and generalization.

\section{Conclusions}
We propose a principled offline safe reinforcement learning framework termed LexiSafe that enforces a lexicographic order between safety and performance. By solving multi-phase optimization, we preserve safety-critical representations while enabling performance improvement. Our theoretical analysis establishes safety violation and performance suboptimality bounds, highlighting the role of function class complexity and distributional shift (via concentrability). We also construct sample complexity bounds for LexiSafe in the single-cost and multi-cost scenarios. The proposed approaches yield practical and theoretically grounded guarantees for safe policy learning from offline data. This work supports safe RL deployment in high-stakes domains by decoupling safety and performance and providing offline guarantees, addressing the need for reliability in safety-critical applications.

\section*{Acknowledgment}
This work is supported by the COALESCE: COntext Aware LEarning for Sustainable CybEr-Agricultural Systems (NSF $\#$1954556), AI Institute for Resilient Agriculture (USDA-NIFA $\#$2021-647021-35329) and NSF CNS-2313104.

\bibliographystyle{IEEEtran}
\bibliography{references}

\end{document}